\documentclass[10pt]{article}
\usepackage[preprint]{tmlr}

\usepackage{amsmath, amssymb, amsthm, mathtools}
\usepackage{graphicx}
\usepackage{booktabs}
\usepackage{tabularx}
\usepackage{multirow}
\usepackage{array}
\usepackage{hyperref}
\hypersetup{colorlinks=true, linkcolor=blue, citecolor=blue, urlcolor=blue}
\usepackage{cleveref}
\usepackage{microtype}
\usepackage{xcolor}
\usepackage{enumitem}
\usepackage{tikz}
\usetikzlibrary{arrows.meta, positioning}
\usepackage[section]{placeins}

\newtheorem{theorem}{Theorem}[section]
\newtheorem{lemma}[theorem]{Lemma}
\newtheorem{proposition}[theorem]{Proposition}
\newtheorem{corollary}[theorem]{Corollary}

\theoremstyle{definition}

\newtheorem{example}[theorem]{Example}
\newtheorem{remark}[theorem]{Remark}

\newcommand{\Vstar}{V^\star}
\newcommand{\Vhat}{\hat V}
\newcommand{\taunbr}{\tau_{\mathrm{nbr}}}
\newcommand{\epslab}{\epsilon_{\mathrm{lab}}}
\newcommand{\etainf}{\eta_\infty} 
\newcommand{\AMLE}{\textsc{AMLE}}
\newcommand{\bdry}{\partial}

\title{Planner-Admissible Graph-PDE Value Extensions\\
for Sparse Goal-Conditioned Planning}
\author{\name Shiheng Zhang \email shzhang3@uw.edu \\
        \addr Department of Applied Mathematics, University of Washington}

\begin{document}
\maketitle

\begin{abstract}
Sparse goal-conditioned planning with few cost-to-go labels can be
cast as graph-PDE Dirichlet extension: extend labels on a small
boundary $\Gamma_g$ to unlabelled vertices so that greedy rollouts
reach the goal.  We argue the key criterion is \emph{planner
admissibility}, not pointwise error.  Within the graph
$p$-Laplacian family, harmonic averaging ($p = 2$) often fails to
preserve local greedy orderings, whereas high-$p$ and
Absolutely Minimal Lipschitz Extension (\AMLE) /
Lipschitz-extremal extensions enter a high-success regime.
\AMLE\ is the simple, analyzable $p = \infty$ endpoint of this
family, with a local midrange update and a clean comparison /
Lipschitz theory.

Our main theoretical statement is a
\textbf{planner-admissibility certificate}: under the
operational argmin-$Q$ planner, if at every state visited by
the surrogate-greedy rollout the local value error stays below
half the true action gap, then the rollout reaches the goal
(Theorem~\ref{thm:planner-admissibility}).  \AMLE\ instantiates
the certificate through a comparison-principle fill-distance
bound (Corollary~\ref{cor:amle-admissibility}); harmonic can
violate it because its neighbour rankings are
boundary-label-weighted hitting-probability rankings, not
shortest-path rankings (Lemma~\ref{lem:harmonic-measure},
Lemma~\ref{lem:harmonic-anti-admissibility}).  All $p$-Laplacian
endpoints share a structural maximum principle
(Lemma~\ref{lem:3.6.3}): no strict interior sinks across
$p \in [2, \infty]$.  An operator-level identity
(Proposition~\ref{prop:3.7}) shows that
shortest-path distance is locally \AMLE-compatible on the
geodesically extendable interior, while harmonic-compatibility
requires a non-generic degree-balance condition
$n_+ = n_-$ (Corollary~\ref{cor:harmonic-residual}).  A refinement-stable
seven-node construction is a positive-margin witness for a local
harmonic-ordering failure mode that persists under graph
subdivision.

On $120$ AntMaze layout-derived graph
configurations~\citep{fu2020d4rl}, aggregate rollout success is
$0.584$ for $p = 2$, $0.903$ for $p = 4$, $0.973$ for $p = 8$,
$0.982$ for a fixed-budget $p = 16$ solver, and $0.970$ for
\AMLE.  Large finite-$p$ rows are not used for exact endpoint
ranking because solver certification is incomplete.  On the
rollout-weighted decision scope, \AMLE\ reduces the
low neighbour-Kendall-$\taunbr$ tail from $0.064$ to $0.015$ and
the true cost-to-go gap of the surrogate-chosen action from
$0.049$ to $0.006$.
\end{abstract}

%
%

\section{Introduction}
\label{sec:intro}

Goal-conditioned reinforcement learning (GCRL) supports agents
that reach designated states by consuming a goal-conditioned
value function
$\Vstar(s, g)$ on a state-transition graph
\citep{schaul2015universal, andrychowicz2017hindsight,
eysenbach2019sorb}.  In practice this value function is observed
at only a small subset of state-goal pairs --- a sparse boundary
$\Gamma_g \subset V$ of cost-to-go labels --- and must be
extended to unlabelled interior vertices before the agent can
plan.  Many extension principles are available: harmonic averaging
\citep{zhu2003harmonic}, smoothness-regularised graph
$p$-Laplacians \citep{kyng2015algorithms, calder2018game}, and
Lipschitz-extremal completions \citep{aronsson1967extension,
peres2006tug}.  Existing analyses of value approximation justify
greedy policies through action-value ordering or weighted
$L^p$ loss bounds \citep{singh1994upper, munos2003error}, but
graph semi-supervised learning objectives optimise extension
smoothness or pointwise fit.  Our criterion is the induced graph
planner: which extension principle preserves the local action
ordering on which one-step decisions depend?

We frame this as a graph partial differential equation
(graph-PDE) Dirichlet extension problem.  The
state-transition graph $G = (V, E, w)$ has positive edge weights
$w(s, y)$ encoding transition costs; the boundary $\Gamma_g
\subset V$ carries the observed labels with $g \in \Gamma_g$ as
the goal; the surrogate $\Vhat$ extends these labels to the
interior $V \setminus \Gamma_g$.  The planner used throughout this
paper, matching the graph-search rule of
Search-on-Replay-Buffer-style methods~\citep{eysenbach2019sorb},
is the operational argmin-$Q$ rule
\[
  T_{\Vhat, g}(s) \;=\; \arg\min_{y \,\sim\, s}\,
  \bigl[w(s, y) + \Vhat(y, g)\bigr],
\]
with consistent tie-breaking and absorbing $g$.  Within the graph
$p$-Laplacian family, the natural endpoints are harmonic averaging
at $p = 2$ and the Absolutely Minimal Lipschitz Extension (\AMLE)
at $p = \infty$, with intermediate $p$ interpolating between
them; the question is which members of the family preserve the
action ordering well enough for the argmin-$Q$ rollout to reach
the goal from interior starts.

Our main theoretical result is a
\textbf{planner-admissibility certificate}
(Theorem~\ref{thm:planner-admissibility}): if local surrogate
error stays below half the true action gap along the realised
rollout, the greedy rollout reaches the goal.  \AMLE\
instantiates the certificate via a fill-distance comparison
bound (Corollary~\ref{cor:amle-admissibility}); harmonic can
violate it because its rankings are harmonic-measure rankings
rather than shortest-path rankings
(Lemma~\ref{lem:harmonic-anti-admissibility}).  All
$p$-Laplacian endpoints share a structural maximum principle
(Lemma~\ref{lem:3.6.3}): no strict interior local extrema across
$p \in [2, \infty]$.  The \AMLE-vs-harmonic separator therefore
lives in the local action-ordering layer.

Empirically, on $120$ paired AntMaze layout-derived
configurations~\citep{fu2020d4rl} ($61{,}440$ rollouts per
method), aggregate rollout success is $0.584 \pm 0.230$ for
$p = 2$ and $\mathbf{0.970 \pm 0.061}$ for $p = \infty$, a
$\mathbf{+38.6}$~pp paired lift (95\% run-bootstrap CI
$[+34.9, +42.3]$).  Intermediate $p \in \{4, 8, 16\}$ cluster
with \AMLE\ in the $0.90$--$0.98$ regime, with the few-pp
$p = 16$ vs $p = \infty$ lift solver-tolerance-bound rather
than a converged endpoint ranking.  The mechanism audit in
Section~\ref{sec:experiments}
localises the gap to operator geometry: $54.8\%$ of rollout
decisions sit in geometry where the true distance $d_g$ is
locally \AMLE-compatible but harmonic-incompatible
(Proposition~\ref{prop:3.7} and
Corollary~\ref{cor:harmonic-residual}); $99.3\%$ of harmonic
inversions concentrate there,
\AMLE\ corrects $93.1\%$ of inversions, and the per-state
half-gap test of Lemma~\ref{lem:3.6.X.2} (the local hypothesis
underlying Theorem~\ref{thm:planner-admissibility}) certifies
$67.0\%$ of inversion events.  The ordering and mechanism audits
are decision-scope diagnostics, not direct upper bounds on
$\phi(\Vhat)$.

The local mechanism is illustrated by a seven-node graph $G_7$
(Example~\ref{ex:3.X}) with sparse-label boundary
$\Gamma_g = \{0, 7\}$ and labels $Y_g(0) = 0$, $Y_g(7) = 3$.
At decision state $s = 4$ the true greedy neighbour is $3$,
but harmonic assigns $\hat u_2(1) = 36/29 < \hat u_2(3) = 39/29$
(\emph{wrong}), whereas \AMLE\ assigns $\hat V_\infty(3) = 1 <
\hat V_\infty(1) = 4/3$ (\emph{correct}); the inversion persists
under uniform graph subdivision
(Corollary~\ref{cor:subdivision-equivariance}).  An adversarial
$4 \times 4$ subgraph search complements this construction with
mirror cases where harmonic is greedy-perfect and \AMLE\
degenerates on a plateau, so the seven-node example illustrates
a mechanism rather than a global comparison.

\paragraph{Contributions.}
We formulate sparse GCRL value completion as a planner-admissible
graph-PDE extension problem and read the surrogate choice as a
$p$-family geometry question.  Specifically:
\begin{itemize}[itemsep=2pt, topsep=2pt, parsep=0pt]
\item We prove a local action-gap certificate
  (Theorem~\ref{thm:planner-admissibility}) and its \AMLE\
  fill-distance instantiation
  (Corollary~\ref{cor:amle-admissibility}) translating
  sparse-label density into greedy-rollout success.
\item We establish harmonic anti-admissibility through
  harmonic-measure rankings
  (Lemma~\ref{lem:harmonic-anti-admissibility}) and an
  operator-level mismatch with shortest-path distance
  (Proposition~\ref{prop:3.7},
  Corollary~\ref{cor:harmonic-residual}).
\item We validate the resulting $p$-family transition on $120$
  AntMaze graph layouts via the main rollout experiment, the
  ordering audit, the mechanism audit, and the failure-mode
  decomposition.
\item Adaptive label selection, closed-loop continuous control,
  and certified intermediate-$p$ solvers are open directions
  deferred to §\ref{sec:future}.
\end{itemize}


\section{Related work and positioning}%
\label{sec:related}%

\paragraph{Harmonic / Laplacian SSL and low-label degeneracy.}
The classical harmonic SSL approach fits a function agreeing
with labels on a boundary set and minimising quadratic Dirichlet
energy on the interior~\citep{zhu2003harmonic, zhou2004learning},
generalised by manifold and graph-Laplacian
regularisation~\citep{belkin2003laplacian, belkin2006manifold,
bengio2006label, smola2003kernels}.  At very sparse labellings the
harmonic minimiser is known to degenerate to a near-constant
spike pattern~\citep{nadler2009limit}; properly-weighted graph
Laplacians restore well-posedness~\citep{calder2020properly,
calder2023rates}.  Harmonic is the $p = 2$ endpoint of our
$p$-family; its low-label degeneracy is one structural reason
planner admissibility should not be checked through pointwise
harmonic accuracy.

\paragraph{\AMLE, $p$-Laplacian, and Lipschitz learning.}
The absolutely minimising Lipschitz extension (\AMLE) of
\citet{aronsson1967extension} coincides in the continuum with
the viscosity solution of the infinity-Laplace equation, made
precise via tug-of-war games~\citep{peres2006tug,
sheffield2010vector, manfredi2012dynamic} and characterised by
comparison principles~\citep{jensen1993uniqueness,
juutinen2006equivalence, legruyer2007amle}; convergent
numerics: \citet{oberman2005convergent}.  The graph
$p$-Laplacian interpolates harmonic ($p=2$) and \AMLE\ 
($p=\infty$); efficient algorithms, game-theoretic unification,
and consistency / convergence-rate analyses are surveyed
in~\citep{kyng2015algorithms, calder2018game, bungert2021lipschitz,
elmoataz2017game, elalaoui2016asymptotic, slepcev2019analysis,
calder2019consistency, roith2023continuum, garciatrillos2020maximum}.
Provably convergent fast $\ell_p$-regression solvers for
intermediate $p$~\citep{adil2019irls, adil2024fastlp} are
relevant to §\ref{sec:future}.

\paragraph{Eikonal viscosity solutions on networks.}
Discrete eikonal / shortest-path solvers include fast
marching~\citep{sethian1996fast}, Dijkstra-style
discretisations~\citep{tsitsiklis1995efficient}, and heat-flow
alternatives~\citep{crane2013geodesics}; viscosity theory on
metric graphs is developed
in~\citep{schieborn2011eikonal, camilli2013comparison,
achdou2013constrained}.  Proposition~\ref{prop:3.7} is a distinct
statement: shortest-path distance also satisfies the graph-\AMLE\ 
midrange identity on geodesically extendable interiors.  We do
\emph{not} claim that the sparse-label \AMLE\ extension is the
network eikonal solution.

\paragraph{GCRL and positioning.}
GCRL value learning~\citep{schaul2015universal,
andrychowicz2017hindsight, pong2018tdm} on D4RL
AntMaze~\citep{fu2020d4rl} combines with planning-aware
architectures~\citep{tamar2016vin, eysenbach2019sorb,
nasiriany2019planning} and spectral / Laplacian
viewpoints~\citep{mahadevan2007proto, dayan1993successor,
stachenfeld2017predictive}.  Existing graph-PDE / Lipschitz
work studies interpolation, regularisation, and continuum
limits but not which extension avoids greedy-planner failure
modes; existing GCRL work does not study sparse graph-label
completion.  The theme that value error matters for greedy
control only through action ordering goes
back~\citep{singh1994upper, williams1993tight, munos2003error,
munos2007performance, farahmand2010error}; we sit at this
intersection.  Our planner-admissibility theory selects within
the graph $p$-Laplacian family via a local action-margin
certificate (Theorem~\ref{thm:planner-admissibility}), with the
local action-ordering and operator-compatibility properties
(Lemma~\ref{lem:harmonic-anti-admissibility},
Proposition~\ref{prop:3.7}) as the separating axes; the family
shares a structural maximum principle (Lemma~\ref{lem:3.6.3}).  We test on D4RL
AntMaze \emph{layout graphs}; closed-loop continuous control is
deferred (§\ref{sec:future}).

\section{Planner admissibility under the GCRL argmin-$Q$ planner}
\label{sec:admissibility}

We develop the \emph{planner-admissibility certificate}
(§\ref{sec:planner-admissibility}): a local action-margin
condition along the realised rollout sufficient for the
$\Vhat$-greedy planner to reach the goal.  It is the local
graph-planning analogue of classical value-error-to-greedy-policy
bounds~\citep{williams1993tight, singh1994upper, munos2003error},
with the global Bellman-residual norm replaced by per-state
local error $\epsilon_s$ and the discount-factor condition by a
one-step action gap $\Delta^\star_s$.
Section~\ref{sec:amle-instantiation} instantiates the certificate
for \AMLE\ via a sparse-label fill-distance bound; the remaining
subsections explain how harmonic \emph{can} violate the
certificate in planner-relevant local decisions through
harmonic-measure rather than shortest-path rankings, while both
endpoints share an interior maximum principle.

\subsection{Operational setup}
\label{sec:adm-setup}

Let $G = (V, E, w)$ be a finite, connected, undirected graph with
positive symmetric edge weights, $\bdry \subset V$ a non-empty
boundary, and $g \in \bdry$ a goal vertex.  The true cost-to-go
$\Vstar(\cdot, g) : V \to \mathbb{R}_{\ge 0}$ is the graph
shortest-path cost-to-go: $\Vstar(g, g) = 0$ and, for every
$s \in V \setminus \{g\}$,
\begin{equation}
  \Vstar(s, g) \;=\; \min_{y : y \sim s} \bigl[w(s, y) + \Vstar(y, g)\bigr]
  \label{eq:bellman-shortest-path}
\end{equation}
(the Bellman / shortest-path identity used in the proofs below).
We assume $\Vstar$ is known on $\bdry$, and a surrogate
$\Vhat : V \to \mathbb{R}$ extends the boundary condition
$\Vhat\rvert_\bdry = \Vstar\rvert_\bdry$ to the interior.

We adopt the operational argmin-$Q$ planner of the experiments
(§\ref{sec:experiments}).  Define the surrogate action value
$Q_{\Vhat, g}(s, y) := w(s, y) + \Vhat(y, g)$ on each edge
$\{s, y\} \in E$, and the deterministic successor
\[
  T_{\Vhat, g}(s) \;:=\; \arg\min_{y : y \sim s} Q_{\Vhat, g}(s, y),
\]
with a fixed consistent tie-breaking rule (lexicographic on
vertex indices throughout this paper).  The goal $g$ is
absorbing: $T_{\Vhat, g}(g) := g$.  Other boundary vertices are
\emph{not} absorbing --- the planner continues from any
$z \in \bdry \setminus \{g\}$ along the same rule.  Starting
from $s_0 \in V$, the rollout is the orbit
$s_{t + 1} = T_{\Vhat, g}(s_t)$ for $t \ge 0$.  Since $V$ is
finite and $T_{\Vhat, g}$ is deterministic, the orbit
eventually enters a finite set $\rho(s_0) \subseteq V$ that is
either $\{g\}$ (operational success) or a non-goal directed
cycle (operational failure).

The \emph{operational failure rate} is
$\phi(\Vhat) := |\{s_0 \in V : \rho(s_0) \ne \{g\}\}| /
|V \setminus \{g\}|$.

\begin{lemma}[Operational basin partition]
\label{lem:bookkeeping}
Under the deterministic argmin-$Q$ planner above on a finite
connected graph, every $s_0 \in V$ has $\rho(s_0)$ equal to
$\{g\}$ or to a directed cycle in $V \setminus \{g\}$.
Equivalently, $V$ partitions into the goal basin $\{s_0 :
\rho(s_0) = \{g\}\}$ and the operational failure set
$F(\Vhat) := \{s_0 : \rho(s_0) \ne \{g\}\}$.
\end{lemma}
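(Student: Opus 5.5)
The argument is the standard structure theory of a self-map on a finite set, specialised to $T := T_{\Vhat, g}$. First, $T$ is a well-defined function $V \to V$. Every vertex has at least one neighbour because $G$ is connected and contains the non-empty boundary, so it has at least one vertex. (If $|V| = 1$, then $V = \{g\}$ and the claim is trivial.) The argmin over the finite neighbour set is non-empty, the fixed lexicographic tie-breaking makes it single-valued, and $T(g) := g$ by convention.

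Fix $s_0$ and consider the orbit $s_t = T^t(s_0)$. By pigeonhole on the $|V|+1$ terms $s_0, \dots, s_{|V|}$, there are indices $i < j$ with $s_i = s_j$. Take the least such $i$, and then the least period $k = j - i \ge 1$. Determinism then gives $s_{t+k} = s_t$ for all $t \ge i$. So the orbit eventually enters the set $\rho(s_0) := \{s_i, \dots, s_{i+k-1}\}$ and stays there forever. The elements of $\rho(s_0)$ are distinct by minimality of $k$, and the arcs $s_t \to T(s_t)$ form a directed cycle, which is a single loop when $k = 1$.

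Next, the dichotomy. Suppose $g \in \rho(s_0)$. Since $T(g) = g$, the cycle through $g$ has period $1$, so $\rho(s_0) = \{g\}$. Otherwise $\rho(s_0) \subseteq V \setminus \{g\}$ is a directed cycle there. One point needs care. If the paper's "directed cycle" should exclude self-loops $T(s) = s$ with $s \ne g$, I would rule these out: $T(s)$ is always chosen among neighbours $y \sim s$, and an undirected simple graph has no loops, so $T(s) \ne s$ for $s \ne g$. Every non-goal cycle therefore has length $k \ge 2$. This is the only place graph structure enters beyond finiteness.

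Finally, $\rho(s_0)$ is uniquely determined by $s_0$: it is the set of states visited infinitely often. The two alternatives are mutually exclusive. So the sets $\{s_0 : \rho(s_0) = \{g\}\}$ and $F(\Vhat)$ are disjoint and their union is $V$, which gives the partition.

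I expect no real obstacle here. The only delicate step is the bookkeeping that makes $\rho(s_0)$ well-defined, namely choosing the minimal preperiod and period. A secondary point is checking that the non-absorbing boundary vertices $\bdry \setminus \{g\}$ need no special treatment, since $T$ acts on them by the same rule.
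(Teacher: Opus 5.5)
Your proposal is correct and follows the same route as the paper's one-line proof: a deterministic self-map on a finite set has an eventually periodic orbit, and absorption at $g$ forces any limit cycle containing $g$ to be $\{g\}$. Your additional checks make explicit what the paper leaves implicit, namely that $T_{\Vhat,g}$ is well defined, the minimal preperiod/period bookkeeping, and that a loopless graph admits no non-goal self-loops.
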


\begin{proof}
The forward orbit on a finite state space with deterministic
$T_{\Vhat, g}$ eventually repeats; with $g$ absorbing, the limit
set is either $\{g\}$ or a cycle in $V \setminus \{g\}$.
\end{proof}

In Section~\ref{sec:experiments}, the main rollout experiment
records per-start outcomes as
\texttt{reached} ($\rho(s_0) = \{g\}$) or \texttt{loop}
($s_0 \in F(\Vhat)$); the \texttt{loop} share equals
$\phi(\Vhat)$.

\subsection{The planner-admissibility certificate}
\label{sec:planner-admissibility}

We first state the certificate for an arbitrary boundary-pinned
surrogate $\Vhat$; §\ref{sec:amle-instantiation} then lets the
goal-specific sparse label set $\Gamma_g$ play the role of
$\bdry$ and specialises the surrogate to the \AMLE\ extension.  The certificate is built from three
local ingredients at each non-goal state $s$: the true
Bellman-optimal neighbour set $A^\star(s, g)$, the true local
action gap $\Delta^\star_s$, and the surrogate local error
$\epsilon_s$.

Write
$N(s) := \{y \in V : y \sim s\}$ for the neighbour set of $s$
and $d(s) := |N(s)|$ for the vertex degree, and write the true
one-step action value as
\[
  Q^\star(s, y; g) \;:=\; w(s, y) + \Vstar(y, g).
\]
Define the true Bellman-optimal neighbour set at each non-goal
state $s$:
\[
  A^\star(s, g)
  \;:=\;
  \arg\min_{y \sim s}\, Q^\star(s, y; g),
\]
and the true \emph{action gap}
\[
  \Delta^\star_s
  \;:=\;
  \begin{cases}
    \min\limits_{b \in N(s) \setminus A^\star(s, g)}\,
    \bigl[Q^\star(s, b; g) - \min\limits_{a \in A^\star(s, g)} Q^\star(s, a; g)\bigr],
      & N(s) \setminus A^\star(s, g) \ne \emptyset, \\
    +\infty,
      & N(s) = A^\star(s, g).
  \end{cases}
\]
(The $+\infty$ branch is the trivial case where every neighbour
is optimal; the certificate then holds vacuously.)  For a
surrogate $\Vhat$, the local error on $N(s)$ is
\[
  \epsilon_s
  \;:=\;
  \max_{y \sim s} \bigl|\Vhat(y, g) - \Vstar(y, g)\bigr|.
\]
Edge costs cancel in the argmin-$Q$ comparison, so
$|Q_{\Vhat}(s, y) - Q^\star(s, y; g)| = |\Vhat(y, g) -
\Vstar(y, g)| \le \epsilon_s$: the value-level local error
$\epsilon_s$ directly controls Q-rankings at $s$.  This
cancellation is why the certificate below is a single
state-level condition on $\Vhat$ rather than on $Q_{\Vhat}$.

The local-error / action-gap pair $(\epsilon_s,
\Delta^\star_s)$ enters the theory through a single-state
action-ordering lemma; its top-1 conclusion is the
theorem-critical part, and a pairwise-inversion bound is a
diagnostic by-product used by the ordering-audit table in Section~\ref{sec:experiments}.

\begin{lemma}[Local action-ordering preservation]
\label{lem:3.6.X.2}
\label{lem:3.6.X}
At every non-goal $s \in V \setminus \{g\}$, if
$\epsilon_s < \Delta^\star_s / 2$ then
\[
  \arg\min_{y \sim s}\, Q_{\Vhat, g}(s, y)
  \;\subseteq\; A^\star(s, g).
\]
For the Kendall-$\tau$ diagnostic statement that follows, assume
in addition that $d(s) \ge 2$ and fix a deterministic
tie-breaking convention for true $Q^\star$-ties (tied pairs are
counted in $M_{s, g}(0)$).  The top-1 set-admissibility
conclusion above does \emph{not} require tie-free true actions.
The neighbour-Kendall-$\tau$ at $(s, g)$ defined by
$\taunbr(s, g) := 1 - \frac{4 \, I_{s, g}}{d(s)(d(s) - 1)}$
with $I_{s, g}$ the count of pairwise inversions of the
surrogate ordering relative to the true ordering also satisfies
the gap-dependent bound
\[
  \taunbr(s, g) \;\ge\; 1 - \frac{4 \, M_{s, g}(2 \epsilon_s)}{d(s)(d(s) - 1)},
\]
where $M_{s, g}(\eta) := \#\{(i, j) : i < j, |Q^\star_i - Q^\star_j| \le \eta\}$
counts small-gap pairs of neighbours.
\end{lemma}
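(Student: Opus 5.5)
The plan is to reduce both conclusions to the edge-cost cancellation noted just before the statement. That cancellation gives $|Q_{\Vhat,g}(s,y) - Q^\star(s,y;g)| \le \epsilon_s$ for every $y \in N(s)$, so the surrogate action values form a uniform $\epsilon_s$-perturbation of the true ones. Everything then follows from a triangle-inequality comparison between pairs of neighbours.

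\textbf{Top-1 inclusion.} If $N(s) = A^\star(s,g)$ there is nothing to prove, since the surrogate argmin lies in $N(s)$; so assume $\Delta^\star_s < \infty$. Let $m^\star := \min_{y\sim s} Q^\star(s,y;g)$, which equals $Q^\star(s,a;g)$ for every $a \in A^\star(s,g)$. Fix any $a \in A^\star(s,g)$ and any $b \in N(s)\setminus A^\star(s,g)$. Then
\[
Q_{\Vhat,g}(s,b) \ge Q^\star(s,b;g) - \epsilon_s \ge m^\star + \Delta^\star_s - \epsilon_s,
\]
\[
Q_{\Vhat,g}(s,a) \le m^\star + \epsilon_s .
\]
Hence $Q_{\Vhat,g}(s,b) - Q_{\Vhat,g}(s,a) \ge \Delta^\star_s - 2\epsilon_s > 0$. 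So every non-optimal $b$ is strictly beaten by some element of $A^\star(s,g)$, and $b$ cannot lie in the surrogate argmin set. This gives the inclusion for the whole argmin set, independent of tie-breaking and without any assumption of tie-free true actions.

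\textbf{Kendall-$\tau$ bound.} The same two-sided estimate applies to an arbitrary pair $i<j$ of neighbours. Suppose $|Q^\star_i - Q^\star_j| > 2\epsilon_s$, say $Q^\star_i < Q^\star_j$. Then
\[
\hat Q_j - \hat Q_i \ge Q^\star_j - Q^\star_i - 2\epsilon_s > 0,
\]
so the surrogate orders the pair strictly and in the same direction as the truth, and the pair is not an inversion. Consequently every inverted pair satisfies $|Q^\star_i - Q^\star_j| \le 2\epsilon_s$, which gives $I_{s,g} \le M_{s,g}(2\epsilon_s)$. Substituting into the definition $\taunbr(s,g) = 1 - 4I_{s,g}/(d(s)(d(s)-1))$, and using $d(s)\ge 2$ so that the denominator is positive, yields the stated inequality.

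\textbf{Main obstacle.} The only delicate point is bookkeeping for ties: true $Q^\star$-ties and surrogate ties resolved by the lexicographic rule. Under the fixed convention, a pair with $Q^\star_i = Q^\star_j$ may be declared an inversion, but such a pair has gap $0 \le 2\epsilon_s$ and is therefore already counted in $M_{s,g}(0) \subseteq M_{s,g}(2\epsilon_s)$. A surrogate tie can only occur when the true gap is at most $2\epsilon_s$, by the strict inequality above. I would state these two observations explicitly so that the counting step $I_{s,g} \le M_{s,g}(2\epsilon_s)$ holds under any deterministic tie convention.
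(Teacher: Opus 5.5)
Your proposal is correct and follows the paper's proof: both bound each surrogate pairwise $Q$-difference to within $2\epsilon_s$ of the true difference via the edge-cost cancellation. From this, $I_{s,g}\le M_{s,g}(2\epsilon_s)$ follows, and $\Delta^\star_s>2\epsilon_s$ gives the argmin inclusion. Your explicit handling of the $\Delta^\star_s=+\infty$ case and of the true-tie and surrogate-tie bookkeeping spells out details the paper leaves implicit, but it is the same argument.
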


\begin{proof}[Proof sketch]
For $a \in A^\star(s, g)$, $b \in N(s) \setminus A^\star(s, g)$,
the true gap is at least $\Delta^\star_s$ while the surrogate
perturbs each $Q$-value by at most $\epsilon_s$, so
$Q_{\Vhat, g}(s, b) - Q_{\Vhat, g}(s, a) \ge \Delta^\star_s -
2\epsilon_s > 0$.  Hence $\arg\min_y Q_{\Vhat,g}(s, y) \subseteq
A^\star(s, g)$.  The same gap argument applied pair-by-pair
yields the Kendall-$\tau$ bound.  Full proof in
Appendix~\ref{app:proofs}.
\end{proof}

Single-state action-ordering lifts to a rollout-level
admissibility certificate.

\begin{theorem}[Planner-admissibility certificate]
\label{thm:planner-admissibility}
Consider the deterministic $\Vhat$-greedy rollout
$s_0, s_1, \ldots$ on $G$ with positive edge costs and absorbing
$g$.  If for every non-goal state $s_t$ visited before the rollout
either reaches $g$ or repeats a non-goal state,
\[
  \epsilon_{s_t} \;<\; \Delta^\star_{s_t} / 2,
\]
then $T_{\Vhat, g}(s_t) \in A^\star(s_t, g)$ at every such step,
$\Vstar(s_t, g)$ strictly decreases along the rollout, and the
rollout reaches $g$ in finitely many steps.
\end{theorem}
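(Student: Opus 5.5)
The plan is to run an induction along the rollout, applying Lemma~\ref{lem:3.6.X.2} one step at a time. Use the Bellman identity~\eqref{eq:bellman-shortest-path} to turn optimal-action membership into strict descent of $\Vstar$. Then use finiteness of $V$ to exclude non-goal cycles. The awkward point is that the hypothesis is only stated on the prefix visited \emph{before} the rollout reaches $g$ or repeats a non-goal state. So I will first show that no repetition can occur within that prefix, and only then conclude that the prefix ends at $g$.

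\textbf{Step 1 (one-step admissibility).} Fix a visited non-goal $s_t$ in the prefix, so that $\epsilon_{s_t} < \Delta^\star_{s_t}/2$. By Lemma~\ref{lem:3.6.X.2}, $\arg\min_{y\sim s_t} Q_{\Vhat,g}(s_t,y) \subseteq A^\star(s_t,g)$. The planner's successor $T_{\Vhat,g}(s_t)$ is some element of this argmin, whatever the lexicographic tie-breaking picks. Hence $s_{t+1} = T_{\Vhat,g}(s_t) \in A^\star(s_t,g)$. In the vacuous case $\Delta^\star_{s_t} = +\infty$ we have $N(s_t) = A^\star(s_t,g)$, so membership is immediate.

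\textbf{Step 2 (strict descent).} Since $s_{t+1}\in A^\star(s_t,g)$ and $s_t\ne g$, the identity~\eqref{eq:bellman-shortest-path} gives
\[
\Vstar(s_t,g) = w(s_t,s_{t+1}) + \Vstar(s_{t+1},g),
\]
and $w>0$ then yields $\Vstar(s_{t+1},g) < \Vstar(s_t,g)$. Thus $\Vstar(\cdot,g)$ is strictly decreasing along the prefix.

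\textbf{Step 3 (termination at $g$).} This is the main obstacle, because of the circularity in how the prefix is defined. Let $\tau$ be the first time at which either $s_\tau = g$ or $s_\tau = s_{t'}$ for some $t' < \tau$ with $s_\tau \ne g$. Such a $\tau$ exists and satisfies $\tau \le |V|$ by pigeonhole on the finite set $V$. Every $s_t$ with $t<\tau$ is a non-goal state in the prefix, so Steps~1--2 apply to all of them.

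Suppose the stopping event at $\tau$ were a repetition $s_\tau = s_{t'}$ with $t'<\tau$. Chaining the strict inequalities of Step~2 from $t'$ to $\tau$ gives
\[
\Vstar(s_\tau,g) < \Vstar(s_{t'},g) = \Vstar(s_\tau,g),
\]
which is a contradiction. Hence $s_\tau = g$, and the rollout reaches $g$ in at most $|V|-1$ steps. Because $g$ is absorbing, the orbit then stays at $g$, so $\rho(s_0)=\{g\}$ in the sense of Lemma~\ref{lem:bookkeeping}. If one wants to avoid pigeonhole, there is an alternative: $\Vstar(\cdot,g)$ takes finitely many values, so a strictly decreasing sequence of them must be finite.

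The only delicate points are these two. First, tie-breaking in the planner never selects outside $A^\star$, which holds because the whole surrogate argmin lies inside $A^\star$. Second, the hypothesis covers exactly the states at which Steps~1--2 are needed, which is precisely how $\tau$ is defined.
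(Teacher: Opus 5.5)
Your proof is correct and follows the paper's argument. Lemma~\ref{lem:3.6.X.2} at each visited state puts $s_{t+1}$ in $A^\star(s_t,g)$, the Bellman identity turns this into strict descent of $\Vstar$, and finiteness excludes non-goal cycles. The differences are minor: you define the stopping time $\tau$ explicitly to handle the prefix-dependent hypothesis and bound the step count by $|V|-1$ via distinct visited states, whereas the paper's sketch bounds it by $\lceil \Vstar(s_0,g)/w_{\min}\rceil$ and leaves the prefix issue to ``finiteness excludes non-goal cycles.''
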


\begin{proof}[Proof sketch]
Lemma~\ref{lem:3.6.X.2} at each $s_t$ gives $s_{t+1} \in
A^\star(s_t, g)$, so $\Vstar(s_{t+1}, g) = \Vstar(s_t, g) -
w(s_t, s_{t+1})$ strictly decreases; with positive edge costs,
the strict-descent sequence reaches $\Vstar = 0$ (i.e., $g$) in
at most $\lceil \Vstar(s_0, g) / w_{\min} \rceil$ steps, where
$w_{\min} := \min_{\{s, y\} \in E} w(s, y) > 0$.  Finiteness
excludes non-goal cycles.
\end{proof}

Theorem~\ref{thm:planner-admissibility} is the central
operational certificate: \emph{rollout success follows from a
local action-margin condition along the actual trajectory}.  Its
single-state version Lemma~\ref{lem:3.6.X.2} is the verification
tool used to certify the \AMLE\ endpoint and to flag harmonic
failures in subsequent subsections.

The ordering audit (Kendall-$\tau$) and mechanism audit (local action margin) are
decision-scope diagnostics for the certificate's local
ingredients, not direct upper bounds on $\phi(\Vhat)$ --- which
the main rollout experiment measures directly.  For a start distribution $\mu$ on $V$,
writing $\phi_\mu(\Vhat) := \Pr_{s_0 \sim \mu}[\rho(s_0) \ne
\{g\}]$ for the corresponding failure probability, the
path-existential inequality $\phi_\mu(\Vhat) \le \Pr_{s_0 \sim
\mu}[\exists s_t : \epsilon_{s_t} \ge \Delta^\star_{s_t}/2]$
and a bad-tail mass diagnostic
(Corollary~\ref{cor:3.6.X.4}) are recorded in
Appendix~\ref{app:proofs}.

The certificate's content is now fully laid out at the
operational level.  We turn next to its first concrete
instantiation: the \AMLE\ endpoint of the graph $p$-Laplacian
family, where a comparison-principle fill-distance bound
controls $\epsilon_s$ directly from sparse-label density.

\subsection{\AMLE\ instantiation of the certificate}
\label{sec:amle-instantiation}

We specialise to unit-cost graphs ($w \equiv 1$) with boundary
the goal-specific labelled set $\Gamma_g \subset V$,
$g \in \Gamma_g$, and observed labels $Y_g : \Gamma_g \to
\mathbb{R}$ satisfying $\|Y_g - \Vstar(\cdot, g)\|_{\infty,
\Gamma_g} \le \epslab$.  The \AMLE\ extension $\hat V_\infty :=
\mathcal{A}_{\Gamma_g}(Y_g)$ is the unique boundary-pinned fixed
point of the midrange operator
$\mathcal{A}[u](x) := \tfrac{1}{2}(\min_{y \sim x} u(y) +
\max_{y \sim x} u(y))$ on $V \setminus \Gamma_g$
(\citealp{sheffield2010vector, kyng2015algorithms,
aronsson1967extension, jensen1993uniqueness,
juutinen2006equivalence}); it controls extremal slopes rather
than averaging values.  Under $w \equiv 1$ the planner reduces
to $T_{\hat V_\infty, g}(s) = \arg\min_{y \sim s} \hat
V_\infty(y)$, so the admissibility conditions of
§\ref{sec:planner-admissibility} are value-level conditions on
$\hat V_\infty$ alone.

\begin{lemma}[\AMLE\ local extension error]
\label{lem:3.6.X.1}
Let $A \subseteq V \setminus \Gamma_g$.  Define the \AMLE\ residual of
the true cost-to-go on $A$:
\[
  \etainf(\Vstar; A, \Gamma_g) \;:=\;
  \bigl\| \mathcal{A}_{\Gamma_g}(\Vstar(\cdot, g)\rvert_{\Gamma_g}) - \Vstar(\cdot, g) \bigr\|_{\infty, A}.
\]
Then
\[
  \bigl\| \hat V_\infty(\cdot, g) - \Vstar(\cdot, g) \bigr\|_{\infty, A}
  \;\le\; \epslab + \etainf(\Vstar; A, \Gamma_g).
\]
If, in addition, $\Vstar(\cdot, g)$ is $L_g$-Lipschitz on
$A \cup \Gamma_g$ with fill distance
$h_{\Gamma_g}(A) := \max_{x \in A} \min_{z \in \Gamma_g} d_G(x, z)$,
then $\etainf(\Vstar; A, \Gamma_g) \le 2 L_g \, h_{\Gamma_g}(A)$, and
in particular
\[
  \bigl\| \hat V_\infty - \Vstar \bigr\|_{\infty, A}
  \;\le\; \epslab + 2 L_g \, h_{\Gamma_g}(A).
\]
\end{lemma}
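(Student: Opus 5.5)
Two ingredients give the proof: a comparison principle for the midrange operator, which handles the label noise, and a Lipschitz-extremality bound for \AMLE, which handles the residual term.

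\textbf{Step 1: label noise.} Write $U := \mathcal{A}_{\Gamma_g}(\Vstar(\cdot,g)\rvert_{\Gamma_g})$ for the \AMLE\ extension of the exact labels. Two facts about the midrange operator $\mathcal{A}$ are needed.
\begin{itemize}[itemsep=1pt, topsep=2pt]
\item It is monotone.
\item It commutes with adding constants: $\mathcal{A}[u + c] = \mathcal{A}[u] + c$.
\end{itemize}
Consequently, the boundary-pinned fixed point with data $\Vstar\rvert_{\Gamma_g} + \epslab$ is exactly $U + \epslab$, and similarly for $-\epslab$.

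The discrete comparison principle for $\infty$-harmonic functions on finite connected graphs (Jensen / Armstrong--Smart style, via the cited uniqueness results) then gives
\[
  U - \epslab \;\le\; \hat V_\infty \;\le\; U + \epslab .
\]
The triangle inequality on $A$ yields
\[
  \|\hat V_\infty - \Vstar\|_{\infty,A} \le \epslab + \etainf .
\]
I would cite comparison rather than reprove it. If a proof is wanted, the standard argument runs as follows. Suppose $\max(u - v) > \max_{\Gamma_g}(u - v)$. Take the set where $u - v$ attains its maximum, and within it the points where the local slope of $u$ is largest. At such a point the midrange equation forces a neighbour that is still in the set but has a strictly larger slope, which is impossible.

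\textbf{Step 2: fill-distance bound.} Fix $x \in A$ and choose $z \in \Gamma_g$ with $d_G(x,z) \le h := h_{\Gamma_g}(A)$. Both $U$ and $\Vstar$ agree at $z$, so
\[
  |U(x) - \Vstar(x)| \le |U(x) - U(z)| + |\Vstar(z) - \Vstar(x)| .
\]
The second term is at most $L_g h$ by hypothesis. For the first term I need $U$ to be $L_g$-Lipschitz along a path from $x$ to $z$. This uses the defining property of \AMLE: its Lipschitz constant is no larger than that of any extension of the boundary data. In particular it is bounded by the McShane extension of $\Vstar\rvert_{\Gamma_g}$, whose Lipschitz constant in $d_G$ is the boundary Lipschitz constant. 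That constant is at most $L_g$, since $\Vstar$ is $L_g$-Lipschitz on $A \cup \Gamma_g$.

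The main obstacle is a path-versus-ambient-metric mismatch. The hypothesis gives $\Vstar$ Lipschitz only on $A \cup \Gamma_g$ in the metric $d_G$. The \AMLE\ slope bound, however, is a bound on edge differences over all of $V \setminus \Gamma_g$. Since $w \equiv 1$ and $d_G$ is the ambient graph metric, an edge-wise bound of $L_g$ integrates along a geodesic of length at most $h$ to give $|U(x) - U(z)| \le L_g h$. The delicate point is checking that the discrete \AMLE's maximal edge slope is bounded by the Lipschitz constant of the boundary data in $d_G$, not merely by its Lipschitz constant on $\Gamma_g$ with respect to some other metric. I expect this to follow from the extremality property quoted from \citet{sheffield2010vector, kyng2015algorithms}. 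If not, a fallback is to use Step 1's comparison with the cone barriers $\Vstar(z) \pm L_g\, d_G(\cdot, z)$. These are $\infty$-super- and sub-solutions away from $z$, and comparison sandwiches $U$ near $x$.

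Summing the two terms gives $\etainf \le 2L_g h$. Substituting into Step 1 completes the lemma.
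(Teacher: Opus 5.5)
Your proposal is correct and follows the paper's own route. The label-noise term is handled by boundary-pinned comparison (sup-norm non-expansiveness of $\mathcal{A}_{\Gamma_g}$) plus the triangle inequality, and the fill-distance term by a triangle inequality through the nearest label $z$, using that the \AMLE\ extension inherits the $d_G$-Lipschitz constant of the boundary data. The point you flag as delicate is not actually an obstacle. On a unit-weight graph the maximal edge slope equals the $d_G$-Lipschitz constant, and the McShane extension in $d_G$ already attains the boundary constant, which is at most $L_g$, so Lipschitz-minimality of the \AMLE\ settles it; your cone-comparison fallback would also work.
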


\begin{proof}
See Appendix~\ref{app:proofs} (proof of Lemma~\ref{lem:3.6.X.1}).
\end{proof}

Combining Lemma~\ref{lem:3.6.X.1} (with $A = N(s_t)$) and
Theorem~\ref{thm:planner-admissibility} yields the \AMLE\
admissibility certificate.

\begin{corollary}[\AMLE\ planner-admissibility certificate]
\label{cor:amle-admissibility}
If along the \AMLE-greedy rollout from $s_0$,
\[
  \epslab + 2 L_g \, h_{\Gamma_g}(N(s_t))
  \;<\; \Delta^\star_{s_t} / 2
\]
for every non-goal state $s_t$ visited before termination, then
the \AMLE-greedy rollout from $s_0$ reaches $g$.
\end{corollary}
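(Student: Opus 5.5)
The plan is to combine Lemma~\ref{lem:3.6.X.1} and Theorem~\ref{thm:planner-admissibility} directly, with \AMLE\ playing the role of the surrogate $\Vhat := \hat V_\infty$ and the goal-specific labelled set $\Gamma_g$ playing the role of the boundary $\bdry$.

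First I will fix a non-goal state $s_t$ visited before termination and apply Lemma~\ref{lem:3.6.X.1} with $A = N(s_t)$. There is a technical point here: the lemma requires $A \subseteq V \setminus \Gamma_g$, but $N(s_t)$ may contain labelled vertices. I will therefore split $N(s_t)$ into two parts.
\begin{itemize}
\item On $N(s_t) \cap \Gamma_g$ the surrogate is pinned to the labels, so $|\hat V_\infty(y) - \Vstar(y,g)| = |Y_g(y) - \Vstar(y,g)| \le \epslab$.
\item On $A' := N(s_t) \setminus \Gamma_g$ the lemma gives the bound $\epslab + 2L_g h_{\Gamma_g}(A')$. Since $A' \subseteq N(s_t)$, we have $h_{\Gamma_g}(A') \le h_{\Gamma_g}(N(s_t))$, because the fill distance is a maximum over a smaller set.
\end{itemize}
If $A'$ is empty, only the first bound is needed. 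In either case
\[
  \epsilon_{s_t} = \max_{y \sim s_t} |\hat V_\infty(y,g) - \Vstar(y,g)| \le \epslab + 2 L_g\, h_{\Gamma_g}(N(s_t)).
\]
This step also uses that the Lipschitz hypothesis of the lemma holds on $A' \cup \Gamma_g$. I take this to be implicit in the corollary's use of $L_g$: $L_g$ is the Lipschitz constant of $\Vstar(\cdot,g)$ on the relevant neighbourhoods together with $\Gamma_g$. Under unit costs $\Vstar(\cdot, g)$ is $1$-Lipschitz in $d_G$ by the triangle inequality, so $L_g \le 1$ always works.

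Second, the corollary's hypothesis then gives $\epsilon_{s_t} < \Delta^\star_{s_t}/2$ at every visited non-goal state before termination. This is exactly the hypothesis of Theorem~\ref{thm:planner-admissibility} for the rollout $s_{t+1} = T_{\hat V_\infty, g}(s_t)$. The positive-cost requirement holds because $w \equiv 1$, and $g$ is absorbing by definition. The theorem then yields that $\Vstar(s_t, g)$ strictly decreases along the rollout, and the rollout reaches $g$ within $\Vstar(s_0,g)$ steps.

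The only genuine obstacle is the boundary-neighbour bookkeeping and the domain of the Lipschitz hypothesis in the first step. Once the neighbourhood error bound is in place, the rest is a direct chaining of the two earlier results.
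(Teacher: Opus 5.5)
Your proposal is correct and is the paper's argument: Lemma~\ref{lem:3.6.X.1} with $A = N(s_t)$ bounds $\epsilon_{s_t}$ by $\epslab + 2L_g h_{\Gamma_g}(N(s_t))$, and Theorem~\ref{thm:planner-admissibility} then concludes. Your extra bookkeeping goes beyond the paper in a useful way: you treat $N(s_t)\cap\Gamma_g$ separately through the pinned labels (error at most $\epslab$), use monotonicity of the fill distance on $N(s_t)\setminus\Gamma_g$, and make the Lipschitz hypothesis explicit, which fills the domain condition $A \subseteq V\setminus\Gamma_g$ that the paper's one-line proof passes over silently.
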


\begin{proof}
Lemma~\ref{lem:3.6.X.1} applied with $A = N(s_t)$ gives
$\epsilon_{s_t}^{\AMLE} \le \epslab + 2 L_g h_{\Gamma_g}(N(s_t))$,
which under the hypothesis is strictly below
$\Delta^\star_{s_t}/2$.
Theorem~\ref{thm:planner-admissibility} concludes.
\end{proof}

Corollary~\ref{cor:amle-admissibility} is the
\emph{sparse-label-density-to-greedy-correctness} chain for
\AMLE: small enough $\epslab + 2 L_g h_{\Gamma_g}$ relative to
the local action gap along the rollout implies success.  The
bound is per-state along the realised rollout; a global-uniform
variant (Corollary~\ref{cor:3.6.X.3}) is in
Appendix~\ref{app:proofs}.  The harmonic certificate is not in general satisfied.  We first
record the $p$-Laplacian maximum principle shared by all
endpoints (§\ref{sec:fam-max-principle}), then develop the
harmonic-side mechanism that violates the certificate
(§\ref{sec:harmonic-anti-admissibility}) and the operator-level
explanation (§\ref{sec:operator-compat}).

\subsection{Maximum principle for the \texorpdfstring{$p$}{p}-Laplacian family}
\label{sec:fam-max-principle}

Exact $p$-Laplacian Dirichlet solutions admit no strict interior
local extrema, a structural property shared across
$p \in [2, \infty]$ and inherited in particular by both harmonic
and \AMLE\ (Lemma~\ref{lem:3.6.3}).

Define $M(u) := \{m \in V \setminus \Gamma_g : u(m) < u(s')
\text{ for every } s' \sim m\}$ as the strict-interior-min set of
any $u : V \to \mathbb{R}$.

\begin{lemma}[\texorpdfstring{$p$-Laplacian}{p-Laplacian} maximum principle: no strict interior extrema]
\label{lem:3.6.3}
Let $u : V \to \mathbb{R}$ be an exact interior solution, with
pinned Dirichlet boundary $u\rvert_{\Gamma_g} = Y_g$, of any one of
the three graph $p$-Laplacian completion problems:
\begin{enumerate}[label=(\roman*)]\setlength\itemsep{0pt}
\item harmonic, $p = 2$:
$\sum_{y \sim x} (u(y) - u(x)) = 0$;
\item finite graph $p$-Laplacian, $2 < p < \infty$:
$\sum_{y \sim x} |u(y) - u(x)|^{p - 2} (u(y) - u(x)) = 0$;
\item \AMLE\ / $p = \infty$:
$u(x) = \tfrac{1}{2}\bigl(\min_{y \sim x} u(y) + \max_{y \sim x} u(y)\bigr)$.
\end{enumerate}
Then $u$ has no strict interior local minimum and no strict
interior local maximum.  In particular $M(u) = \emptyset$.
\end{lemma}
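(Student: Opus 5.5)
The proof goes by contradiction, one case per operator, using only the local equation at a single interior vertex. Suppose $m \in V \setminus \Gamma_g$ is a strict interior local minimum, so $u(m) < u(y)$ for every $y \sim m$; the strict-maximum case follows by applying the same argument to $-u$. Each of the three equations (i)--(iii) is invariant under $u \mapsto -u$. For (i) and (ii) this holds because the map $t \mapsto |t|^{p-2} t$ is odd. For (iii) it holds because $\min$ and $\max$ exchange under negation. Since $G$ is connected and $\Gamma_g \neq \emptyset$, an interior vertex $m$ has at least one neighbour, so $N(m) \neq \emptyset$ and all sums and extrema at $m$ are over a non-empty set.

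For (i), every term $u(y) - u(m)$ with $y \sim m$ is strictly positive, so $\sum_{y \sim m} (u(y) - u(m)) > 0$. This contradicts the harmonic equation at $m$.

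For (ii), the same sign argument applies. Each summand $|u(y) - u(m)|^{p-2}(u(y) - u(m))$ is strictly positive whenever $u(y) - u(m) > 0$, for any $p > 1$. The sum over the non-empty set $N(m)$ is therefore strictly positive, which contradicts the $p$-Laplacian equation at $m$.

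For (iii), strictness at every neighbour gives $\min_{y \sim m} u(y) > u(m)$, and trivially $\max_{y \sim m} u(y) \ge \min_{y \sim m} u(y)$. The midrange therefore satisfies $\tfrac12\bigl(\min_{y\sim m} u(y) + \max_{y\sim m} u(y)\bigr) \ge \min_{y \sim m} u(y) > u(m)$, contradicting the midrange equation at $m$. Consequently $M(u) = \emptyset$, and symmetrically $u$ has no strict interior local maximum.

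I see no substantive obstacle; the only points needing care are the following.
\begin{itemize}
\item Non-emptiness of $N(m)$, which is supplied by connectedness as noted above.
\item The definition of ``strict local extremum'' must be the neighbour-wise one used in $M(u)$. This is exactly what makes every term in each equation have the same strict sign.
\end{itemize}
The argument deliberately avoids any global comparison principle or uniqueness theory for the $p$-Laplacian, since only the pointwise equation at $m$ is used. This is consistent with the lemma's hypothesis that $u$ is an \emph{exact} interior solution rather than an approximate solver output.
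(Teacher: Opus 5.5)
Your proof is correct and matches the paper's argument: in each of the three cases you take a vertex-local contradiction, where the strict sign of every summand (or of the neighbourhood $\min$ and $\max$) forces the operator off zero at $m$. The only differences are cosmetic: you get the maximum case from the invariance of the equations under $u \mapsto -u$ rather than by reversing inequalities, and you state explicitly that $N(m)$ is non-empty, which the paper leaves implicit.
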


\begin{proof}
See Appendix~\ref{app:proofs} (proof of Lemma~\ref{lem:3.6.3}).
\end{proof}

\begin{remark}[Operational consequence: interior cycles are plateau cycles]
\label{rem:3.6.3.2}
Let $u$ be an exact solution of any of the three Dirichlet
problems in Lemma~\ref{lem:3.6.3} on a finite connected
unweighted graph $G$.  At every interior $s \in V \setminus
\Gamma_g$ there exists $y \sim s$ with $u(y) \le u(s)$ (else
$s \in M(u)$), so the argmin-$Q$ planner satisfies
$u(T_{u, g}(s)) \le u(s)$: $u$ is monotone non-increasing along
interior rollout segments.  A directed cycle contained in
$V \setminus \Gamma_g$ therefore forces $u$ to be constant on
the cycle (plateau cycle), and no strict-decrease interior cycle
exists for any $p \in \{2\} \cup (2, \infty) \cup \{\infty\}$.
Cycles touching $\Gamma_g \setminus \{g\}$ are not controlled
by this interior argument --- pinned boundary values are
unaffected by the max principle, so $T_{u, g}(z)$ for
$z \in \Gamma_g \setminus \{g\}$ may increase $u$ --- and are
separately tracked by the \emph{boundary-touching cycle basin}
\[
  C^\Gamma(\Vhat) \;:=\; \{s_0 \in V : \rho(s_0) \cap (\Gamma_g
  \setminus \{g\}) \ne \emptyset\}\;\;\subseteq\;\;F(\Vhat),
\]
the sub-class of operational failures whose limit cycle passes
through some labelled non-goal vertex.  The empirical failure-mode decomposition of §\ref{sec:exp-failure-decomp} shows that this
boundary-touching subclass is aggregate-dominant under both
surrogates.  The family separator therefore lies on the
local-ordering and operator-compatibility axes of subsequent
subsections.
\end{remark}

We next turn to the local-ordering mechanism by which harmonic
can violate the certificate in planner-relevant decisions:
boundary-label-weighted hitting probabilities
(§\ref{sec:harmonic-anti-admissibility}), with the
operator-level reason (§\ref{sec:operator-compat}) that
shortest-path distance is locally compatible with the \AMLE\
midrange but not with harmonic averaging unless
degree-balanced.

\paragraph{Finite-sweep refinement.}
For finite-iteration \AMLE\ solvers, a residual--margin bound
(Lemma~\ref{lem:3.6.3.B} in Appendix~\ref{app:proofs}) controls
the strict-min depth at any approximate interior local minimum
by the midrange residual; this is the \AMLE-side numerical
refinement used by the \AMLE\ iteration audit.

\subsection{Harmonic anti-admissibility certificate}
\label{sec:harmonic-anti-admissibility}

The harmonic endpoint admits the classical random-walk
representation from harmonic graph SSL~\citep{zhu2003harmonic},
which locates where its neighbour ranking can disagree with the
shortest-path ranking.

\begin{lemma}[Harmonic-measure representation of neighbour rankings]
\label{lem:harmonic-measure}
Let $h$ be the harmonic extension of $y : \bdry \to \mathbb{R}$ on
a connected unweighted graph.  With $(X_t)$ the simple random
walk and $\omega_x(z) := \mathbb{P}_x(X_{\sigma_\bdry} = z)$ the
harmonic measure (where $\sigma_\bdry := \inf\{t \ge 0 : X_t \in
\bdry\}$ is the boundary-hitting time; optional stopping gives
$h(x) = \sum_z \omega_x(z) y_z$), for any decision state $s$
with neighbours $a, b \sim s$,
\[
  h(a) - h(b) \;=\; \sum_{z \in \bdry} \bigl(\omega_a(z) - \omega_b(z)\bigr) y_z.
\]
\end{lemma}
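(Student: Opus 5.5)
The plan is to prove the representation $h(x) = \sum_{z \in \bdry} \omega_x(z)\, y_z$ for every $x \in V$, and then obtain the displayed identity by subtracting the representations at $a$ and at $b$. Everything reduces to justifying the optional-stopping step mentioned in the statement.

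\emph{Step 1 (martingale).} Let $h$ be the harmonic extension, so $h\rvert_\bdry = y$ and $h(x) = d(x)^{-1}\sum_{y' \sim x} h(y')$ at every interior $x$. The stopped process $M_t := h(X_{t \wedge \sigma_\bdry})$ is then a martingale under $\mathbb{P}_x$ with respect to the walk's natural filtration. Indeed, on $\{t < \sigma_\bdry\}$ the current state $X_t$ is interior, so the conditional expectation of $h(X_{t+1})$ is the neighbour average, which equals $h(X_t)$. On $\{t \ge \sigma_\bdry\}$ the process is frozen.

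\emph{Step 2 (hitting time).} I will show $\sigma_\bdry < \infty$ almost surely, and in fact that it has a geometric tail. Since $G$ is finite and connected and $\bdry \neq \emptyset$, every vertex has a path to $\bdry$ of length at most $n := |V|$. Along such a path the walk follows the path with probability at least $\delta := d_{\max}^{-n} > 0$. By the Markov property, $\mathbb{P}_x(\sigma_\bdry > kn) \le (1-\delta)^k$. This is the only step with real content, and it is where connectivity and a non-empty boundary are used.

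\emph{Step 3 (optional stopping).} The martingale $M_t$ is bounded by $\max_V |h|$, which is finite because $V$ is finite. Bounded convergence together with $\sigma_\bdry < \infty$ a.s. therefore gives
\[
h(x) = \mathbb{E}_x M_0 = \lim_{t\to\infty}\mathbb{E}_x M_t = \mathbb{E}_x\bigl[h(X_{\sigma_\bdry})\bigr] = \sum_{z \in \bdry} \omega_x(z)\, y_z .
\]
The last equality uses $h = y$ on $\bdry$ and the definition of $\omega_x$. For $x \in \bdry$ the formula holds trivially, since $\sigma_\bdry = 0$ and $\omega_x = \delta_x$; this matters because the neighbours $a, b$ may themselves be labelled.

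\emph{Step 4 (conclusion).} Apply Step 3 at $x = a$ and at $x = b$ and subtract. Both sums run over the finite set $\bdry$, so the termwise difference is legitimate and gives the stated identity.

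As a fallback avoiding probability, the representation also follows from uniqueness of the discrete Dirichlet problem. The function $x \mapsto \sum_z \omega_x(z) y_z$ is harmonic in the interior by first-step analysis, and it equals $y$ on $\bdry$. Uniqueness then comes from the maximum principle of Lemma~\ref{lem:3.6.3}(i) applied to the difference of the two solutions, which vanishes on $\bdry$. Strictly, that lemma only excludes strict extrema, so I would use the standard strong-maximum-principle argument: propagate the maximum along a path to $\bdry$.

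The main obstacle is Step 2, the almost-sure finiteness of the hitting time that justifies optional stopping. Once that is in place the remaining steps are routine.
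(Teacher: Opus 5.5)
Your proof is correct and takes the same route as the paper. The paper gives no separate proof: it invokes optional stopping for $h(x)=\sum_{z}\omega_x(z)\,y_z$ inside the statement and subtracts the representations at $a$ and $b$; you supply the details it leaves implicit (the stopped martingale, almost-sure finiteness of $\sigma_\bdry$ on the finite connected graph with non-empty boundary, bounded convergence, and the trivial case where $a$ or $b$ is labelled).
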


For the goal-specific labelled boundary $\Gamma_g$ with labels
$Y_g$, write $\hat u_2$ for the harmonic ($p = 2$) extension of
$Y_g$ to $V$ and $\omega_v(z) := \mathbb{P}_v(X_{\sigma_{\Gamma_g}}
= z)$ for the harmonic measure from $v$ on $\Gamma_g$ (the
instantiation of Lemma~\ref{lem:harmonic-measure} with
$\bdry = \Gamma_g$).

\begin{lemma}[Harmonic local anti-admissibility certificate]
\label{lem:harmonic-anti-admissibility}
Fix a decision state $s$ with $A^\star(s, g) \subseteq N(s)$
the true Bellman-optimal neighbour set, and let
$b \in N(s) \setminus A^\star(s, g)$ be any strictly suboptimal
competitor.  The harmonic-greedy planner at $s$ strictly prefers
$b$ over every $a \in A^\star(s, g)$ if and only if
\[
  w(s, b) + \hat u_2(b)
  \;<\;
  w(s, a) + \hat u_2(a)
  \qquad
  \text{for every } a \in A^\star(s, g),
\]
equivalently, via Lemma~\ref{lem:harmonic-measure},
\[
  w(s, b) - w(s, a)
  \;+\;
  \sum_{z \in \Gamma_g} \bigl(\omega_b(z) - \omega_a(z)\bigr)
  \, Y_g(z) \;<\; 0
  \qquad
  \text{for every } a \in A^\star(s, g).
\]
On unit-cost graphs ($w \equiv 1$), this reduces to
$\sum_z (\omega_b(z) - \omega_a(z)) Y_g(z) < 0$ for every
$a \in A^\star(s, g)$.
\end{lemma}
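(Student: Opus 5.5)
The statement is essentially definitional, so the plan is to unfold the argmin-$Q$ rule and substitute the representation of Lemma~\ref{lem:harmonic-measure}.

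\emph{First equivalence.} Recall $T_{\hat u_2, g}(s) = \arg\min_{y \sim s} Q_{\hat u_2, g}(s, y)$ with $Q_{\hat u_2, g}(s, y) = w(s, y) + \hat u_2(y)$. I read ``the harmonic-greedy planner at $s$ strictly prefers $b$ over $a$'' as the strict inequality $Q_{\hat u_2, g}(s, b) < Q_{\hat u_2, g}(s, a)$. This reading is independent of the lexicographic tie-breaking rule, because strictness means no tie is ever consulted. Requiring the inequality for every $a \in A^\star(s, g)$ then gives the first display immediately. I would also note, as a remark, that this condition implies the chosen successor $T_{\hat u_2, g}(s)$ lies outside $A^\star(s, g)$. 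The reason is that every optimal action is strictly beaten by $b$, so none of them can attain the minimum of $Q_{\hat u_2, g}(s, \cdot)$. In particular, the conclusion of Lemma~\ref{lem:3.6.X.2} fails at $s$, which is what makes this an anti-admissibility certificate.

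\emph{Second equivalence.} Here I invoke Lemma~\ref{lem:harmonic-measure} with $\bdry = \Gamma_g$ and boundary data $y = Y_g$. This requires $\Gamma_g \neq \emptyset$, which holds since $g \in \Gamma_g$, and it requires $G$ connected, so that $\sigma_{\Gamma_g} < \infty$ almost surely and optional stopping applies. The lemma gives $\hat u_2(b) - \hat u_2(a) = \sum_{z \in \Gamma_g} (\omega_b(z) - \omega_a(z)) Y_g(z)$. I then subtract $w(s, a) + \hat u_2(a)$ from both sides of the first inequality and substitute this identity. The result is exactly the second display. Each step is a reversible algebraic rearrangement, so the two conditions are equivalent for each fixed $a$, and hence also jointly over all $a \in A^\star(s, g)$.

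\emph{Boundary neighbours.} One point needs checking: $a$ or $b$ may itself lie in $\Gamma_g$. In that case $\sigma_{\Gamma_g} = 0$ and $\omega_a = \delta_a$, so the representation still returns $\hat u_2(a) = Y_g(a)$, consistent with the pinned boundary values. I would state this explicitly, since Lemma~\ref{lem:harmonic-measure} defines $\sigma$ with $t \ge 0$.

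\emph{Unit-cost case.} For $w \equiv 1$, the term $w(s, b) - w(s, a)$ vanishes, which gives the reduced form.

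\emph{Main obstacle.} There is no real difficulty. The only care needed is interpretive: fixing the meaning of ``strictly prefers'' so that tie-breaking plays no role, and confirming that Lemma~\ref{lem:harmonic-measure} applies when the competitors are boundary vertices. The unweighted-graph assumption of Lemma~\ref{lem:harmonic-measure} concerns only the random walk defining $\hat u_2$. The weights $w(s, \cdot)$ enter only through the planner's edge costs, so no conflict arises there either.
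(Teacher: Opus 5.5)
Your proposal is correct and follows the paper's own argument: the paper's proof notes that strict preference is exactly $Q_{\hat u_2,g}(s,b) < \min_{a \in A^\star(s,g)} Q_{\hat u_2,g}(s,a)$, then substitutes $\hat u_2(v) = \sum_{z} \omega_v(z) Y_g(z)$ from Lemma~\ref{lem:harmonic-measure}. Your additional checks are sound details the paper leaves implicit: tie-breaking plays no role under strict inequality, $\omega_a = \delta_a$ for boundary neighbours, and connectivity makes optional stopping valid.
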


\begin{proof}
Direct: $Q_{\hat u_2,g}(s,b) < \min_{a \in A^\star} Q_{\hat
u_2,g}(s,a)$ is the displayed inequality; substitute $\hat
u_2(v) = \sum_z \omega_v(z) Y_g(z)$.
\end{proof}

Harmonic ranking at $s$ is determined by differences in boundary
\emph{hitting} probabilities against boundary labels --- a
random-walk averaging ordering, not a shortest-path ordering.  A
truly shorter branch may have larger hitting probability on a
high-cost label and so be ranked higher; harmonic averaging
therefore does not by itself guarantee planner-admissible local
ordering.  \AMLE\ controls extremal slopes rather than averaging
hitting probabilities, which is what enables the local sup-error
chain through Lemma~\ref{lem:3.6.X.1} and
Corollary~\ref{cor:amle-admissibility}.

\subsection{Operator-level compatibility with shortest-path distance}
\label{sec:operator-compat}

Write $d_g(x) := d_G(x, g)$ for the \emph{unweighted}
shortest-path distance (this subsection works on the unit-cost
graph metric, consistent with the \AMLE-instantiation scope of
§\ref{sec:amle-instantiation}), and call $x$ \emph{geodesically
extendable away from $g$} if some $y_+ \sim x$ has $d_g(y_+) =
d_g(x) + 1$; let $V^{\mathrm{ext}}_g$ denote the set of such
vertices.

\begin{proposition}[Distance is graph-\AMLE\ on the extendable interior]
\label{prop:3.7}
For any $x \in V$ with $x \ne g$:
\begin{enumerate}[label=(\alph*), itemsep=0pt, topsep=2pt]
\item $\min_{y \sim x} d_g(y) = d_g(x) - 1$;
\item $\max_{y \sim x} d_g(y) \in \{d_g(x) - 1, d_g(x), d_g(x) + 1\}$;
\item $d_g$ satisfies the \AMLE\ midrange identity
$d_g(x) = \mathcal{A}[d_g](x)$ if and only if
$\max_{y \sim x} d_g(y) = d_g(x) + 1$, equivalently iff
$x \in V^{\mathrm{ext}}_g$.
\end{enumerate}
In particular, at every $x \in V^{\mathrm{ext}}_g$, $d_g$
satisfies both the discrete Bellman / eikonal identity
$d_g(x) = 1 + \min_{y \sim x} d_g(y)$ and the graph-\AMLE\
midrange identity $d_g(x) = \mathcal{A}[d_g](x)$.
\end{proposition}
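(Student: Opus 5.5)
The proof uses only two facts about the unweighted graph metric $d_g = d_G(\cdot, g)$ on a finite connected graph. The first is the $1$-Lipschitz property: $|d_g(y) - d_g(x)| \le 1$ for every edge $\{x, y\}$, which is the triangle inequality with unit edge length. The second is the existence of a predecessor on a geodesic: for $x \ne g$, connectedness gives a shortest path from $x$ to $g$, and its second vertex $y_-$ is a neighbour of $x$ with $d_g(y_-) = d_g(x) - 1$. This is the unit-cost case of the Bellman identity~\eqref{eq:bellman-shortest-path} with $\Vstar(\cdot, g) = d_g$.

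For (a), the Lipschitz property gives $\min_{y \sim x} d_g(y) \ge d_g(x) - 1$, and the geodesic predecessor $y_-$ attains this value. For (b), the Lipschitz property confines every neighbour value to $\{d_g(x) - 1, d_g(x), d_g(x) + 1\}$, because $d_g$ is integer-valued. In particular the maximum over the nonempty set $N(x)$ lies in this set.

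For (c), combine (a) with the definition of $\mathcal{A}$:
\[
  \mathcal{A}[d_g](x) = \tfrac12\bigl(d_g(x) - 1 + \max_{y \sim x} d_g(y)\bigr).
\]
This equals $d_g(x)$ if and only if $\max_{y \sim x} d_g(y) = d_g(x) + 1$. By (b) and the definition of $V^{\mathrm{ext}}_g$, that holds exactly when some neighbour $y_+$ has $d_g(y_+) = d_g(x) + 1$, that is, when $x \in V^{\mathrm{ext}}_g$. The two other cases of (b) give $\mathcal{A}[d_g](x) = d_g(x) - 1$ or $d_g(x) - \tfrac12$, so the identity fails in both. The ``in particular'' clause then follows: the eikonal identity $d_g(x) = 1 + \min_{y \sim x} d_g(y)$ is a restatement of (a), valid for every $x \ne g$, and the midrange identity is (c) restricted to $x \in V^{\mathrm{ext}}_g$. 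Note that $g \notin V^{\mathrm{ext}}_g$ is not an issue, since the statement only concerns $x \ne g$.

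No step is a genuine obstacle. The only point needing care is the integrality used in (b). On a unit-cost graph it is immediate, since $d_g$ counts edges along a path. I would state explicitly that the proposition is confined to the unweighted metric, as the subsection already declares.
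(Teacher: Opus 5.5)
Your proof is correct and matches the paper's argument step for step. The unit-edge triangle inequality together with the second vertex of a shortest path gives (a). The same Lipschitz bound, with the integrality that you make explicit and the paper leaves implicit, gives (b). Substituting $\min_{y \sim x} d_g(y) = d_g(x) - 1$ into the midrange gives (c). The only addition in the paper is a diamond-graph example showing that the case $\max_{y \sim x} d_g(y) = d_g(x) - 1$ actually occurs; the proof does not need it.
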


\begin{proof}
See Appendix~\ref{app:proofs} (proof of Proposition~\ref{prop:3.7}).
\end{proof}

Proposition~\ref{prop:3.7}(c) is a fixed-point statement about
$d_g$ itself; the sparse-label \AMLE\ extension is a separate
object with approximation error controlled by
Lemma~\ref{lem:3.6.X.1}, and the global Bellman / eikonal
identity $d_g(x) = 1 + \min_y d_g(y)$ holds at every $x \ne g$
on all of $V$.  The \AMLE\ midrange compatibility of $d_g$ on
$V^{\mathrm{ext}}_g$ should not be identified with solving the
full eikonal / shortest-path problem from the sparse-label
data.

\begin{corollary}[Harmonic residual of the shortest-path distance]
\label{cor:harmonic-residual}
Partition $N(x) = N_+(x) \sqcup N_0(x) \sqcup N_-(x)$ by
$d_g(y) - d_g(x) \in \{+1, 0, -1\}$ with sizes $n_\sigma(x)$.
Then
\[
  \tfrac{1}{\deg(x)} \sum_{y \sim x} d_g(y) - d_g(x)
  \;=\; \tfrac{n_+(x) - n_-(x)}{\deg(x)}.
\]
So $d_g$ satisfies the harmonic-averaging identity at $x$ iff
$n_+(x) = n_-(x)$, whereas by Proposition~\ref{prop:3.7} it
satisfies the \AMLE\ midrange identity iff $n_+(x) \ge 1$.
\end{corollary}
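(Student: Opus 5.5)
The proof is a direct count. The one fact I need is that $d_g$ is $1$-Lipschitz along edges on a connected unweighted graph: $|d_g(y) - d_g(x)| \le 1$ for every $y \sim x$. This follows from the triangle inequality for the graph metric, and it is also the content of Proposition~\ref{prop:3.7}(b). As a result, every neighbour $y \in N(x)$ lies in exactly one of $N_+(x)$, $N_0(x)$, $N_-(x)$, so $n_+(x) + n_0(x) + n_-(x) = \deg(x)$ and the partition in the statement is well defined.

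\textbf{Step 1: the averaging identity.} I write
\[
  \sum_{y \sim x} \bigl(d_g(y) - d_g(x)\bigr)
  = \sum_{\sigma \in \{+,0,-\}} \sum_{y \in N_\sigma(x)} \bigl(d_g(y) - d_g(x)\bigr)
  = n_+(x)\cdot(+1) + n_0(x)\cdot 0 + n_-(x)\cdot(-1).
\]
Dividing by $\deg(x)$, and using $\frac{1}{\deg(x)}\sum_{y\sim x} d_g(x) = d_g(x)$, gives the displayed residual $\frac{n_+(x) - n_-(x)}{\deg(x)}$. Since $\deg(x) \ge 1$ by connectivity, the residual vanishes if and only if $n_+(x) = n_-(x)$. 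This is the harmonic-averaging characterisation, in the unnormalised form $\sum_{y\sim x}(u(y)-u(x)) = 0$ of Lemma~\ref{lem:3.6.3}(i).

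\textbf{Step 2: the \AMLE\ comparison.} I translate Proposition~\ref{prop:3.7}(c) into the same counts. That result says the midrange identity holds at $x \ne g$ if and only if $\max_{y\sim x} d_g(y) = d_g(x)+1$. By the 1-Lipschitz property, this maximum equals $d_g(x)+1$ exactly when some neighbour lies in $N_+(x)$, that is, when $n_+(x) \ge 1$.

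\textbf{Caveats.} The only points needing care are two edge cases. First, the statement is intended for $x \ne g$, as in Proposition~\ref{prop:3.7}. For $x = g$ the averaging formula still holds, with $n_-(g) = 0$, but the \AMLE\ clause is not asserted there. Second, Proposition~\ref{prop:3.7}(a) ensures $n_-(x) \ge 1$ for $x \ne g$, so harmonic compatibility forces $n_+(x) = n_-(x) \ge 1$. Harmonic compatibility therefore implies \AMLE\ compatibility, but not conversely. I would state this inclusion as a remark, since it underlies the "non-generic" reading. I do not expect any step to be a genuine obstacle; the argument is bookkeeping once the Lipschitz trichotomy is in place.
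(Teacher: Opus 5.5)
Your proposal is correct and follows the paper's proof: you use the trichotomy $d_g(y)-d_g(x)\in\{-1,0,+1\}$ to make the partition exhaustive, then sum and divide by $\deg(x)$, and read the \AMLE\ clause off Proposition~\ref{prop:3.7}(c). The paper cites Proposition~\ref{prop:3.7}(a)--(b) for the trichotomy, which is the same fact as your triangle inequality plus integrality of $d_g$. Your caveat that the \AMLE\ clause needs $x \ne g$ is a real sharpening (at $x=g$ one has $n_+(g)=\deg(g)\ge 1$ yet $\mathcal{A}[d_g](g)=1\ne 0$), and your remark that harmonic compatibility forces $n_+=n_-\ge 1$ is a correct addition the paper does not state.
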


\begin{proof}
By Proposition~\ref{prop:3.7}(a)-(b), $d_g(y) - d_g(x) \in
\{-1, 0, +1\}$ for every $y \sim x$, so the trichotomy partition
$N(x) = N_+ \sqcup N_0 \sqcup N_-$ is exhaustive.  Summing
$d_g(y) - d_g(x)$ over $y \sim x$ gives
$(+1) n_+(x) + (0) n_0(x) + (-1) n_-(x) = n_+(x) - n_-(x)$;
dividing by $\deg(x)$ yields the identity.
\end{proof}

The compatibility conditions differ in scope: \AMLE\ asks only
that some outward geodesic continuation exists through $x$,
whereas harmonic requires a non-generic degree-balance
$n_+(x) = n_-(x)$ that fails at junctions, wall-adjacent
vertices, and asymmetric branching points.  Where it fails,
harmonic pulls $d_g$ away from its true value by a residual of
order $|n_+ - n_-| / \deg(x)$ toward the over-represented
neighbour class; together with Lemma~\ref{lem:harmonic-measure},
this is the structural source of the \AMLE-vs-harmonic difference
on sparse-label graph-PDE planning.

\subsection{Combined local separation and mechanism witness}
\label{sec:harmonic-measure}

The two certificates combine into a local separation statement
(Proposition~\ref{prop:local-separation} in
Appendix~\ref{app:proofs}): on decision states where the
\AMLE\ local-admissibility condition
$\epsilon_s^{\AMLE} < \Delta^\star_s / 2$ and the harmonic
anti-admissibility inequality both hold, \AMLE-greedy is
correct and harmonic-greedy is wrong.  The mechanism audit reports the
rollout-weighted firing rates: harmonic inversion $5.6\%$,
\AMLE\ correction of inversions $93.1\%$, and certified
correction (clause (i) fires) $67.0\%$.  The $93.1\%$--$67.0\%$
gap is \AMLE-correctness via mechanisms outside the sufficient
local-error condition --- e.g.\ the Proposition~\ref{prop:3.7}
exact-distance match on $V^{\mathrm{ext}}_g$.

\begin{figure}[!ht]
\centering
\begin{tikzpicture}[
  >=Latex,
  every node/.style={font=\footnotesize},
  vtx/.style={circle, draw, thick, minimum size=7mm, inner sep=0pt, fill=white},
  bdry/.style={circle, draw, thick, minimum size=7mm, inner sep=0pt, fill=gray!15},
  decision/.style={circle, draw=black, very thick, minimum size=8mm, inner sep=0pt, fill=yellow!25},
  ed/.style={draw, thick},
  trueedge/.style={-Latex, line width=1.0pt, green!55!black},
  harmedge/.style={-Latex, line width=1.0pt, red!70!black,
                   dash pattern=on 4pt off 2.5pt},
  info/.style={draw, rounded corners, align=left, inner sep=5pt, fill=white, font=\footnotesize},
]

\node[bdry]     (n0) at (0,    0)     {$0$};
\node[vtx]      (n3) at (1.4,  0)     {$3$};
\node[decision] (n4) at (2.8,  0)     {$4$};
\node[bdry]     (n7) at (4.2,  0)     {$7$};
\node[vtx]      (n5) at (0.7, -1.15)  {$5$};
\node[vtx]      (n1) at (2.1, -1.15)  {$1$};
\node[vtx]      (n6) at (2.1,  1.15)  {$6$};

\draw[ed] (n0) -- (n3);
\draw[ed] (n0) -- (n5);
\draw[ed] (n5) -- (n1);
\draw[ed] (n1) -- (n4);
\draw[ed] (n3) -- (n4);
\draw[ed] (n3) -- (n6);
\draw[ed] (n6) -- (n7);
\draw[ed] (n4) -- (n7);

\node[above=2pt of n4, font=\scriptsize] {$s = 4$};
\node[below=2pt of n0, font=\scriptsize] {$g,\ Y_g(0) = 0$};
\node[below=2pt of n7, font=\scriptsize] {$Y_g(7) = 3$};

\draw[trueedge] (n4) to[bend left=15]  (n3);
\draw[harmedge] (n4) to[bend right=15] (n1);

\node[info, anchor=west, text width=6.4cm] at (5.4, 0) {%
\textbf{Decision at $s = 4$}\\[2pt]
True shortest-path:\\
\quad $d_g(3) = 1 < d_g(1) = 2$
\quad $\Rightarrow$ pick $3$\\[3pt]
Harmonic:\\
\quad $\hat u_2(1) = \tfrac{36}{29} < \hat u_2(3) = \tfrac{39}{29}$
\quad $\Rightarrow$ pick $1$ (suboptimal)\\[3pt]
\AMLE:\\
\quad $\hat V_\infty(3) = 1 < \hat V_\infty(1) = \tfrac{4}{3}$
\quad $\Rightarrow$ pick $3$ (optimal)
};

\end{tikzpicture}

\vspace{4pt}
{\footnotesize
\tikz[baseline=-0.6ex]\draw[-Latex, line width=1.0pt, green!55!black]
  (0,0) -- (10mm,0);\ true / \AMLE\ action
\qquad
\tikz[baseline=-0.6ex]\draw[-Latex, line width=1.0pt, red!70!black,
  dash pattern=on 4pt off 2.5pt] (0,0) -- (10mm,0);\ harmonic action
}

\caption{Seven-node mechanism witness.  At $s = 4$, the true
shortest-path action and \AMLE\ choose neighbour $3$, while
harmonic chooses neighbour $1$.  The right panel gives the
local inequalities that determine the choices.  This is a
positive-margin mechanism witness, not a universal-dominance
claim.}
\label{fig:g7-mechanism}
\end{figure}

\begin{example}[Harmonic-measure mismatch on a 7-node graph]
\label{ex:3.X}
On the seven-node graph $G_7$ defined in
Appendix~\ref{app:g7-setup} (goal $g = 0$, sparse-label
boundary $\Gamma_g = \{0, 7\}$, labels $Y_g(0) = 0$, $Y_g(7) =
3$; Figure~\ref{fig:g7-mechanism}), at decision state $s = 4$
with $A^\star(4, g) = \{3\}$ and gap $\Delta^\star_4 = 1$:
\[
  \hat u_2(1) = \tfrac{36}{29} < \hat u_2(3) = \tfrac{39}{29}
  \quad\text{(harmonic, \emph{wrong})},
  \qquad
  \hat V_\infty(3) = 1 < \hat V_\infty(1) = \tfrac{4}{3}
  \quad\text{(\AMLE, \emph{correct})}.
\]
Harmonic realises clause (ii) of
Proposition~\ref{prop:local-separation} with harmonic-measure
values $\omega_1(7) - \omega_3(7) = -1/29$.  At $s = 4$, \AMLE\
is correct through the operator-level exact match $\hat
V_\infty(3) = \Vstar(3, g) = 1$ of Proposition~\ref{prop:3.7}
($3 \in V^{\mathrm{ext}}_g$) rather than through clause (i)'s
sufficient local-error condition, which gives
$\epsilon_4^{\AMLE} = 2/3 > 1/2 = \Delta^\star_4 / 2$.  This
matches the mechanism audit in Section~\ref{sec:experiments}: the
local-error test certifies $67.0\%$ of harmonic-inversion events,
while \AMLE\ corrects $93.1\%$ overall --- a $\sim 26$~pp gap
consisting of \AMLE\ corrections via operator-level mechanisms
outside the sufficient local-error condition.
Full algebra and the formal mechanism-scope remark are in
Appendix~\ref{app:proofs}.
\end{example}

The inversion of Example~\ref{ex:3.X} persists under uniform
$k$-subdivision with $\lambda_k = k$
(Corollary~\ref{cor:subdivision-equivariance} in
Appendix~\ref{app:proofs}; the analogous \AMLE\ statement is
Lemma~\ref{lem:amle-subdivision} there), so refining the graph
does not rescue harmonic.  An adversarial $4 \times 4$ subgraph
search (Appendix~\ref{app:tables-adversarial}) also exhibits
mirror cases where harmonic is greedy-perfect and \AMLE\
degenerates on interior plateaus adjacent to a high-cost
boundary vertex; the distributional question of which
surrogate wins on a graph family is settled empirically by the
§\ref{sec:experiments} aggregate.

The certificate framework and its two-sided mechanism reading
are now in place.  Section~\ref{sec:pde-family} catalogues the
canonical graph-PDE surrogates used in the experiments under
this admissibility lens, situating harmonic and \AMLE\ as the
$p = 2$ and $p = \infty$ endpoints of a larger family.

%

\section{Graph-PDE surrogate family}%
\label{sec:pde-family}%

Section~\ref{sec:admissibility} cast greedy-rollout failure as a
non-goal directed cycle in the argmin-$Q$ successor map
(Lemma~\ref{lem:bookkeeping}) and identified planner
admissibility as the design criterion for sparse-label value
extension.  We now situate each member of the graph-PDE surrogate
family under this lens.  Every exact $p$-Laplacian extension
satisfies the maximum principle of
§\ref{sec:fam-max-principle} (Lemma~\ref{lem:3.6.3}):
$M(\hat u) = \emptyset$, and interior operational cycles are
uniformly plateau cycles (Remark~\ref{rem:3.6.3.2};
boundary-touching cycles tracked separately by $C^\Gamma$).  The
differentiating axes --- local action-ordering preservation
(Lemma~\ref{lem:3.6.X}), operator-level compatibility with
shortest-path distance (Proposition~\ref{prop:3.7}), and the
harmonic-measure mismatch criterion
(Lemma~\ref{lem:harmonic-measure}) --- separate the family
members and drive the planner-admissibility verdicts in
Table~\ref{tab:pde-family}.  The eikonal row is included as an
\emph{exact-distance / oracle-adjacent reference}, not as a
sparse-label completion method: it requires full graph and
goal-boundary knowledge, and Proposition~\ref{prop:3.7}
clarifies the local \AMLE-vs-eikonal relationship without
identifying the sparse-label \AMLE\ extension with the full
eikonal solve.

\begin{table}[!ht]
\centering
\caption{Planner-admissibility taxonomy for the graph-PDE
surrogates studied in this paper.  Operational rollout
(Lemma~\ref{lem:bookkeeping}): every start either reaches $g$
or enters a non-goal cycle in $F(\Vhat)$.  Maximum-principle
column: $M(\Vhat) = \emptyset$ implies every interior cycle is
a plateau cycle (boundary-touching cycles tracked by $C^\Gamma$,
Remark~\ref{rem:3.6.3.2}).  Algorithm column: standard solver
class.}%
\label{tab:pde-family}%
\footnotesize
\setlength{\tabcolsep}{4pt}
\renewcommand{\arraystretch}{1.15}
\begin{tabular}{@{}l l p{3.4cm} p{2.7cm} p{2.2cm}@{}}
\toprule
$p$ & Equation & Admissibility status & Algorithm & Empirical \\
\midrule
$p = 2$ (harmonic) &
  $\Delta_2 u = 0$ &
  $M = \emptyset$ via Lem.~\ref{lem:3.6.3}; plateau cycles in $F$ unconstrained; anti-admissibility via Lem.~\ref{lem:harmonic-anti-admissibility} &
  $\tilde O(|E|)$ Laplacian solver &
  §5 baseline, $0.584$ \\
\addlinespace
$2 < p < \infty$ &
  $\Delta_p u = 0$ &
  $M = \emptyset$ via Lem.~\ref{lem:3.6.3}; no per-state admissibility certificate &
  IRLS / Picard / L-BFGS-B; no fast intermediate-$p$ solver &
  finite-$p$ family sweep \\
\addlinespace
$p = \infty$ (\AMLE) &
  midrange &
  $M = \emptyset$ via Lem.~\ref{lem:3.6.3}; finite-sweep via Lem.~\ref{lem:3.6.3.B}; admissibility via Cor.~\ref{cor:amle-admissibility} &
  midrange fixed-point, $O(|E|)$ / sweep &
  §5 main, $0.970$ \\
\addlinespace
eikonal (oracle) &
  $|\nabla u| = 1$ &
  exact distance (full-graph oracle) &
  Dijkstra / FMM &
  \emph{reference only} \\
\bottomrule
\end{tabular}
\end{table}

Among the sparse-label methods, only \AMLE\ admits a local
planner-admissibility certificate
(Corollary~\ref{cor:amle-admissibility}) under a fill-distance
hypothesis; harmonic has a complementary local
anti-admissibility certificate
(Lemma~\ref{lem:harmonic-anti-admissibility}); finite-$p$
inherits the shared maximum principle without an analogous
per-state ordinal guarantee.  The empirical $p = 2 \to \infty$
progression in §\ref{sec:exp-family-sweep} reads as a geometry
transition (averaging $\to$ Lipschitz-extremal), with the few-pp
$p = 16$ vs $p = \infty$ lift solver-tolerance-bound rather
than a converged endpoint ranking.

%
%
%

\section{Empirical phase diagram on D4RL AntMaze graph geometry}
\label{sec:experiments}

\subsection{Setup}
\label{sec:exp-setup}

We evaluate the planner-admissible graph-PDE family on the
\textbf{D4RL AntMaze graph geometry}~\citep{fu2020d4rl}, a
standard GCRL benchmark.  Each AntMaze layout
(medium and large) is converted to an unweighted graph $G$ by
sampling reachable cell coordinates at refinement $r \in \{4, 8,
12\}$; this gives approximate medium grids $18 \times 24$,
$36 \times 48$, $54 \times 72$, and large grids $36 \times 48$,
$72 \times 96$, $108 \times 144$.  The labelled set
$\Gamma_g \subset V$ is a sparsely sampled subset with goal-
reachability values; we sweep label fraction $\mathrm{lf} \in
\{0.02, 0.05, 0.08, 0.12\}$ over seeds $\{54, 55, 56, 57, 58\}$.

For each $(\text{maze}, r, \mathrm{lf}, \text{seed})$
configuration we fit a harmonic ($p = 2$) and an \AMLE\
($p = \infty$) surrogate to the observed labels and run the
operational argmin-$Q$ planner from $512$ uniformly sampled
start--goal evaluation pairs, for a total of $120$ paired
configurations and $61{,}440$ rollouts per method.  The main
rollout experiment records
per-pair outcomes as success at goal (\texttt{reached}) versus
rollout loop failure (\texttt{loop}).  A \texttt{loop} outcome
is exactly the operational failure event $\rho(s_0) \ne \{g\}$
of Lemma~\ref{lem:bookkeeping}; the main rollout experiment does
not directly record per-loop cycle structure, but a reconstruction
from main-rollout raw outputs (the failure-mode decomposition,
Section~\ref{sec:exp-failure-decomp}) decomposes each loop into the
interior-cycle subclass and the boundary-touching subclass
$C^\Gamma$ of Remark~\ref{rem:3.6.3.2}.

\subsection{Main rollout phase diagram}
\label{sec:exp-rollout-main}

\begin{table}[h]
\centering
\caption{Main rollout phase diagram, aggregated across all $120$ paired
$(\text{maze}, r, \mathrm{lf}, \text{seed})$ configurations.}
\label{tab:rollout-main}
\begin{tabular}{l rr r}
\toprule
method & success mean $\pm$ sd & run-bootstrap 95\% CI & loop share \\
\midrule
harmonic $p = 2$ & $0.584 \pm 0.230$ & $[0.543, 0.623]$ & $41.6\%$ \\
\AMLE\ $p = \infty$ & $\mathbf{0.970 \pm 0.061}$ & $\mathbf{[0.959, 0.980]}$ & $\mathbf{3.0\%}$ \\
\bottomrule
\end{tabular}
\end{table}

The paired lift is $\mathbf{+38.6 \pm 20.9}$~pp with
run-bootstrap 95\% CI $\mathbf{[+34.9, +42.3]}$~pp (Wilson 95\% on
\AMLE\ eval-pair success $[0.969, 0.971]$).  \AMLE\ reduces the
rollout-loop-failure share from $41.6\%$ to $3.0\%$.  This is a
direct measurement of the operational failure rate
$\phi(\Vhat)$ of Lemma~\ref{lem:bookkeeping}: $\hat V_\infty$
leaves $\sim 3\%$ of starts in non-goal cycles, against
harmonic's $\sim 42\%$.  The run-bootstrap CI shows the
aggregate lift is stable across resampled configurations.  The
$C^\Gamma$ subclass of boundary-touching cycles is decomposed
separately via the failure-mode decomposition
(Section~\ref{sec:exp-failure-decomp}).

\paragraph{Resolution and label-fraction slices.}
Finer AntMaze graph extraction amplifies the harmonic gap: at
$r \in \{4, 8, 12\}$ the lift is
$\{+14.3, +43.7, +57.8\}$~pp (harmonic
$\{0.850, 0.534, 0.368\}$ versus \AMLE\
$\{0.993, 0.971, 0.946\}$).  The harmonic gap also persists
across the label-fraction sweep $\mathrm{lf} \in \{0.02, 0.05,
0.08, 0.12\}$ with lift $\{+39.0, +44.3, +40.9, +30.1\}$~pp;
\AMLE\ saturates to $1.000$ at $\mathrm{lf} = 0.12$, consistent
with Lemma~\ref{lem:3.6.X.1}'s fill-distance bound.  Full
per-cell tables in Appendix~\ref{app:tables-rollout-main}.  The
resolution sweep is aggregate evidence complementary to (but
distinct from) the fixed-graph
Corollary~\ref{cor:subdivision-equivariance}.

\paragraph{Ordering audit.}
On the rollout-weighted decision scope, \AMLE\ reduces the
low-$\taunbr$ tail rate from $0.064$ to $0.015$, the mean true
gap of the surrogate-chosen action from $0.049$ to $0.006$, and
the disagreement rate from $0.025$ to $0.003$; top-1 agreement
rises from $97.5\%$ to $99.7\%$.  These are decision-scope
diagnostics for the Lemma~\ref{lem:3.6.X.2} local-ordering
condition on each method's rollout distribution, not upper
bounds on $\phi(\Vhat)$ (which the main rollout experiment
measures directly).
Scope-resolved tables (\texttt{eval\_rollouts} vs \texttt{all}):
Appendix~\ref{app:tables-ordering}.

\subsection{PDE-family sweep and mechanism audit}%
\label{sec:exp-family-sweep}%

\paragraph{Finite-$p$ family sweep and solver audit.}
The finite-$p$ rows support the family-level geometry
transition: on the full $120$-cell grid, rollout success is
$p = 2$: $0.584$; $p = 4$: $0.903$; $p = 8$: $0.973$; $p = 16$:
$0.982$; $p = \infty$: $0.970$.  We do not treat the few-pp
$p = 16$ lift over \AMLE\ as a converged endpoint ranking: under
the finite-$p$ sweep budget, L-BFGS-B reports convergence on
$95/10/0$ of $120$ cells at $p = 4/8/16$, and the solver audit
shows that the usable high-$p$ solvers (L-BFGS-B and fixed-point
Picard) reproduce high $p = 16$ success while disagreeing on
residual / certification; a capped Newton-CG stress test fails
at $p = 16$.  Finite high-$p$ methods are therefore strong
empirical members of the same high-success regime, while \AMLE\
remains the simple analyzable endpoint with no finite-$p$ tuning
parameter.  Full p-family sweep, solver-audit, and p-family
summary tables (including the
$p$-family low-$\taunbr$ tail trend) are in
Appendices~\ref{app:tables-p-sweep},~\ref{app:tables-solver-audit},
and~\ref{app:tables-p-family}.

\paragraph{Mechanism audit.}
Table~\ref{tab:mechanism-main} reports the rollout-weighted local
decision audit on the $120$-cell grid.  Like the ordering audit, these are
decision-scope diagnostics, not upper bounds on $\phi(\Vhat)$.
More than half of visited decision scopes lie in
\AMLE-compatible / harmonic-incompatible local geometry
($n_+(x) \ne n_-(x)$, Cor.~\ref{cor:harmonic-residual}), and
$99.3\%$ of harmonic inversions on the AntMaze rollout
distribution concentrate there.  Conditional on harmonic
inversions, \AMLE\ corrects $93.1\%$ of cases; the \AMLE\ local
certificate $\epsilon_s^{\AMLE} < \Delta^\star_s / 2$ fires on
$67.0\%$ of inversions, so the $93.1\%$--$67.0\%$ gap is the
mass of \AMLE\ corrections via mechanisms outside the
sufficient local-error condition (consistent with
Proposition~\ref{prop:3.7}'s operator-compatibility picture).
The concentration is empirical, not a structural theorem.

\begin{table}[!ht]
\centering
\caption{Mechanism-audit rollout-weighted summary (full $120$-cell
AntMaze grid).  Geometry classes from $d_g$ on the greedy
decision scope; inversion / correction rows conditioned on
non-tied true-best decisions.  \emph{Certified correction} is
$\epsilon_s^{\AMLE} < \Delta^\star_s/2$ firing
(Lem.~\ref{lem:3.6.X.2}), as fraction of harmonic-inverted
states ($20{,}135/30{,}037$).  Per-resolution table:
App.~\ref{app:tables-mechanism}.}%
\label{tab:mechanism-main}%
\small
\begin{tabular}{p{0.68\textwidth}r}
\toprule
diagnostic & value \\
\midrule
\AMLE-compatible / harmonic-incompatible local geometry & $54.8\%$ \\
harmonic inversion rate among non-tied decisions & $5.6\%$ \\
harmonic inversions in that geometry class & $99.3\%$ \\
\AMLE\ correction given harmonic inversion & $93.1\%$ \\
certified correction given harmonic inversion & $67.0\%$ \\
\bottomrule
\end{tabular}
\end{table}

\subsection{Failure-mode decomposition}%
\label{sec:exp-failure-decomp}%

The failure-mode decomposition reruns the main rollout experiment
on the same $61{,}440$ start--goal pairs and classifies each
operational failure
$\rho(s_0) \ne \{g\}$ by limit-cycle structure: the interior
subclass $F \setminus C^\Gamma$ (limit cycle in $V \setminus
\Gamma_g$) versus the boundary-touching limit-cycle subclass
$C^\Gamma$ of Remark~\ref{rem:3.6.3.2}.  The reconstruction's
total failure count matches the main rollout experiment's
\texttt{loop} count exactly on every one of the $120$
configurations.  The pooled split is
harmonic $41.6\% = 1.6\%\;\text{interior} + 40.0\%\;C^\Gamma$
($96.1\%$ of failures in $C^\Gamma$) and \AMLE\
$3.0\% = 0.34\%\;\text{interior} + 2.66\%\;C^\Gamma$ ($88.6\%$).
$C^\Gamma$ is aggregate-dominant under both surrogates,
confirming the Remark~\ref{rem:3.6.3.2} scope caveat that the
$p$-Laplacian maximum principle (Lemma~\ref{lem:3.6.3}) controls
interior cycles only.  The interior-cycle density under \AMLE\
is an order of magnitude below harmonic; the auxiliary
strict-minimum diagnostic points in the same direction (cases
with any strict interior minimum of $\hat u$ drop from $116/120$
under harmonic to $41/120$ under \AMLE), but the decomposition
does not by itself separate tolerance-driven strict sinks from plateau-cycle
structure, so the appropriate conclusion is numerical-scope
caution rather than a structural failure theorem.

\paragraph{Reference baselines and \AMLE\ iteration audit.}
\label{sec:exp-baselines}
Nearest-label Voronoi on the same sparse $\Gamma_g$ reaches only
$0.032 \pm 0.011$, $93.8$~pp below \AMLE, so PDE-style smoothing
is important in this sparse-label graph setup.  An oracle Dijkstra solve with the full-graph
$\Vstar$ reaches $1.000$ ($\sim 3$~pp above \AMLE), the
full-information upper bound.  The \AMLE\ iteration audit
($24$ cells, $\{50, \ldots, 5000\}$ midrange sweeps) shows
\AMLE\ success $0.707 \to 1.000$ monotonically with the
\AMLE-vs-harmonic gap \emph{widening} ($+30.8 \to +60.1$~pp),
ruling out an under-iterated-\AMLE\ reading.  Full tables in
Appendices~\ref{app:tables-baselines}--\ref{app:tables-amle-iter}.

These results leave the open problems of §\ref{sec:future}:
adaptive label selection, certified intermediate-$p$ solvers,
and extensions beyond the present unweighted graph extraction.


\section{Discussion, limitations, and open problems}
\label{sec:future}

\paragraph{Scope.}
The evaluation (Section~\ref{sec:experiments}) is on the
\emph{graph extraction} of D4RL AntMaze layouts, with label
fractions $\mathrm{lf} \in [0.02, 0.12]$ on unweighted unit-cost
graphs.  The full continuous-control setting --- MuJoCo Ant
dynamics, low-level locomotion, partial observability, and
closed-loop replanning --- is not evaluated.  Within the
admissibility framework itself, the planner-admissibility
certificate (Theorem~\ref{thm:planner-admissibility}) is stated
for general positive edge costs $w(s, y)$, but the \AMLE\
admissibility instantiation
(Corollary~\ref{cor:amle-admissibility}) is specialised to the
unweighted graph setting in §\ref{sec:amle-instantiation}, and
the harmonic comparison uses the unweighted graph-harmonic
representation of §\ref{sec:harmonic-anti-admissibility}
(Lemma~\ref{lem:harmonic-anti-admissibility});
weighted-edge and directed-graph extensions of the
differentiating mechanisms, and the very-sparse limit
$\mathrm{lf} \to 0$, are not tested.

\paragraph{Open problems.}
The $p = 2$ Laplacian solver and $p = \infty$ midrange iteration
are mature endpoints, but $p \in (2, \infty)$ sits in a solver
gap; recent fast $\ell_p$-regression~\citep{adil2019irls,
adil2024fastlp} could turn the family sweep into a
solver-certified endpoint comparison.  Theoretical refinements
of Lemma~\ref{lem:3.6.X} (sharper fill-distance dependence;
expected-ordinal-fidelity statements under structured action-gap
distributions), adaptive sparse-label selection, and
per-decision calibrated
certificates~\citep{vovk2005algorithmic, angelopoulos2021gentle}
are follow-on directions deferred to future work.

\bibliographystyle{tmlr}
\bibliography{bibliography}

@article{aronsson1967extension,
  title={Extension of functions satisfying {L}ipschitz conditions},
  author={Aronsson, Gunnar},
  journal={Arkiv f{\"o}r Matematik},
  volume={6},
  number={6},
  pages={551--561},
  year={1967}
}

@article{sheffield2010vector,
  title={Vector-valued optimal {L}ipschitz extensions},
  author={Sheffield, Scott and Smart, Charles K},
  journal={Communications on Pure and Applied Mathematics},
  volume={65},
  number={1},
  pages={128--154},
  year={2012},
  doi={10.1002/cpa.20391},
}

@article{peres2006tug,
  title={Tug-of-war and the infinity {L}aplacian},
  author={Peres, Yuval and Schramm, Oded and Sheffield, Scott and Wilson, David B.},
  journal={Journal of the American Mathematical Society},
  volume={22},
  number={1},
  pages={167--210},
  year={2009},
  doi={10.1090/S0894-0347-08-00606-1},
}

@inproceedings{kyng2015algorithms,
  title={Algorithms for {L}ipschitz learning on graphs},
  author={Kyng, Rasmus and Rao, Anup and Sachdeva, Sushant and Spielman, Daniel A},
  booktitle={Proceedings of the 28th Annual Conference on Learning Theory},
  series={PMLR},
  volume={40},
  pages={1190--1223},
  year={2015},
}

@article{bungert2021lipschitz,
  title={Uniform convergence rates for {L}ipschitz learning on graphs},
  author={Bungert, Leon and Calder, Jeff and Roith, Tim},
  journal={IMA Journal of Numerical Analysis},
  volume={43},
  number={4},
  pages={2445--2495},
  year={2023},
  doi={10.1093/imanum/drac048},
}

@article{schieborn2011eikonal,
  title={Viscosity solutions for {Eikonal} equations on topological networks},
  author={Schieborn, Dirk and Camilli, Fabio},
  journal={Calculus of Variations and Partial Differential Equations},
  volume={46},
  number={3-4},
  pages={671--686},
  year={2013},
  doi={10.1007/s00526-012-0498-z},
}

@inproceedings{fu2020d4rl,
  title={{D4RL}: Datasets for deep data-driven reinforcement learning},
  author={Fu, Justin and Kumar, Aviral and Nachum, Ofir and Tucker, George and Levine, Sergey},
  booktitle={arXiv preprint arXiv:2004.07219},
  year={2020}
}

@inproceedings{andrychowicz2017hindsight,
  title={Hindsight experience replay},
  author={Andrychowicz, Marcin and Wolski, Filip and Ray, Alex and Schneider, Jonas and Fong, Rachel and Welinder, Peter and McGrew, Bob and Tobin, Josh and Pieter Abbeel, OpenAI and Zaremba, Wojciech},
  booktitle={Advances in neural information processing systems},
  year={2017}
}

@inproceedings{schaul2015universal,
  title={Universal value function approximators},
  author={Schaul, Tom and Horgan, Daniel and Gregor, Karol and Silver, David},
  booktitle={International conference on machine learning},
  year={2015}
}

@inproceedings{zhu2003harmonic,
  title={Semi-supervised learning using {G}aussian fields and harmonic functions},
  author={Zhu, Xiaojin and Ghahramani, Zoubin and Lafferty, John D.},
  booktitle={Proceedings of the 20th International Conference on Machine Learning},
  year={2003},
}

@inproceedings{zhou2004learning,
  title={Learning with local and global consistency},
  author={Zhou, Dengyong and Bousquet, Olivier and Lal, Thomas Navin and Weston, Jason and Sch{\"o}lkopf, Bernhard},
  booktitle={Advances in Neural Information Processing Systems 16},
  year={2003},
}

@article{calder2018game,
  title={The game-theoretic $p$-{L}aplacian and semi-supervised learning with few labels},
  author={Calder, Jeff},
  journal={Nonlinearity},
  volume={32},
  number={1},
  pages={301--330},
  year={2018},
}

@inproceedings{elalaoui2016asymptotic,
  author = {El Alaoui, Ahmed and Cheng, Xiuyuan and Ramdas, Aaditya and Wainwright, Martin J. and Jordan, Michael I.},
  title = {Asymptotic behavior of {$\ell_p$}-based {L}aplacian regularization in semi-supervised learning},
  booktitle = {Proceedings of the 29th Annual Conference on Learning Theory},
  series = {Proceedings of Machine Learning Research},
  volume = {49},
  pages = {879--906},
  year = {2016}
}

@article{slepcev2019analysis,
  author = {Slep{\v{c}}ev, Dejan and Thorpe, Matthew},
  title = {Analysis of {$p$}-{L}aplacian regularization in semi-supervised learning},
  journal = {SIAM Journal on Mathematical Analysis},
  volume = {51},
  number = {3},
  pages = {2085--2120},
  year = {2019},
  doi = {10.1137/17M115222X}
}

@article{calder2020properly,
  author = {Calder, Jeff and Slep{\v{c}}ev, Dejan},
  title = {Properly-weighted graph {L}aplacian for semi-supervised learning},
  journal = {Applied Mathematics \& Optimization},
  volume = {82},
  number = {3},
  pages = {1111--1159},
  year = {2020},
  doi = {10.1007/s00245-019-09637-3}
}

@article{calder2023rates,
  author = {Calder, Jeff and Slep{\v{c}}ev, Dejan and Thorpe, Matthew},
  title = {Rates of convergence for {L}aplacian semi-supervised learning with low labeling rates},
  journal = {Research in the Mathematical Sciences},
  volume = {10},
  number = {1},
  year = {2023},
  doi = {10.1007/s40687-022-00371-x}
}

@article{garciatrillos2020maximum,
  author = {Garc{\'i}a Trillos, Nicol{\'a}s and Murray, Ryan W.},
  title = {A maximum principle argument for the uniform convergence of graph {L}aplacian regressors},
  journal = {SIAM Journal on Mathematics of Data Science},
  volume = {2},
  number = {3},
  pages = {705--739},
  year = {2020},
  doi = {10.1137/19M1245372}
}

@article{roith2023continuum,
  author = {Roith, Tim and Bungert, Leon},
  title = {Continuum limit of {L}ipschitz learning on graphs},
  journal = {Foundations of Computational Mathematics},
  volume = {23},
  number = {2},
  pages = {393--431},
  year = {2023},
  doi = {10.1007/s10208-022-09557-9}
}

@article{adil2024fastlp,
  author = {Adil, Deeksha and Kyng, Rasmus and Peng, Richard and Sachdeva, Sushant},
  title = {Fast algorithms for {$\ell_p$}-regression},
  journal = {Journal of the ACM},
  volume = {71},
  number = {5},
  pages = {1--45},
  year = {2024},
  doi = {10.1145/3686794}
}

@inproceedings{adil2019irls,
  author = {Adil, Deeksha and Peng, Richard and Sachdeva, Sushant},
  title = {Fast, provably convergent {IRLS} algorithm for $p$-norm linear regression},
  booktitle = {Advances in Neural Information Processing Systems 32},
  pages = {14166--14177},
  year = {2019}
}

@article{jensen1993uniqueness,
  author = {Jensen, Robert},
  title = {Uniqueness of {L}ipschitz extensions: minimizing the sup norm of the gradient},
  journal = {Archive for Rational Mechanics and Analysis},
  volume = {123},
  number = {1},
  pages = {51--74},
  year = {1993},
  doi = {10.1007/BF00386368}
}

@article{oberman2005convergent,
  author = {Oberman, Adam M.},
  title = {A convergent difference scheme for the infinity {L}aplacian: construction of absolutely minimizing {L}ipschitz extensions},
  journal = {Mathematics of Computation},
  volume = {74},
  number = {251},
  pages = {1217--1230},
  year = {2005}
}

@article{legruyer2007amle,
  author = {Le Gruyer, Erwan},
  title = {On absolutely minimizing {L}ipschitz extensions and {PDE} {$\Delta_\infty u = 0$}},
  journal = {NoDEA Nonlinear Differential Equations and Applications},
  volume = {14},
  number = {1},
  pages = {29--55},
  year = {2007}
}

@article{juutinen2006equivalence,
  author = {Juutinen, Petri and Shanmugalingam, Nageswari},
  title = {Equivalence of {AMLE}, strong {AMLE}, and comparison with cones in metric measure spaces},
  journal = {Mathematische Nachrichten},
  volume = {279},
  number = {9--10},
  pages = {1083--1098},
  year = {2006},
  doi = {10.1002/mana.200510411}
}

@article{calder2019consistency,
  author = {Calder, Jeff},
  title = {Consistency of {L}ipschitz learning with infinite unlabeled data and finite labeled data},
  journal = {SIAM Journal on Mathematics of Data Science},
  volume = {1},
  number = {4},
  pages = {780--812},
  year = {2019},
  doi = {10.1137/18M1199241}
}

@article{manfredi2012dynamic,
  author = {Manfredi, Juan J. and Parviainen, Mikko and Rossi, Julio D.},
  title = {Dynamic programming principle for tug-of-war games with noise},
  journal = {ESAIM: Control, Optimisation and Calculus of Variations},
  year = {2012},
  doi = {10.1051/cocv/2010046}
}

@article{belkin2003laplacian,
  author = {Belkin, Mikhail and Niyogi, Partha},
  title = {{L}aplacian eigenmaps for dimensionality reduction and data representation},
  journal = {Neural Computation},
  volume = {15},
  number = {6},
  pages = {1373--1396},
  year = {2003},
  doi = {10.1162/089976603321780317}
}

@article{belkin2006manifold,
  author = {Belkin, Mikhail and Niyogi, Partha and Sindhwani, Vikas},
  title = {Manifold regularization: a geometric framework for learning from labeled and unlabeled examples},
  journal = {Journal of Machine Learning Research},
  volume = {7},
  pages = {2399--2434},
  year = {2006}
}

@incollection{bengio2006label,
  author = {Bengio, Yoshua and Delalleau, Olivier and Le Roux, Nicolas},
  title = {Label propagation and quadratic criterion},
  booktitle = {Semi-Supervised Learning},
  pages = {193--216},
  publisher = {MIT Press},
  year = {2006}
}

@incollection{smola2003kernels,
  author = {Smola, Alexander J. and Kondor, Risi},
  title = {Kernels and regularization on graphs},
  booktitle = {Learning Theory and Kernel Machines},
  series = {Lecture Notes in Computer Science},
  volume = {2777},
  pages = {144--158},
  year = {2003},
  doi = {10.1007/978-3-540-45167-9_12}
}

@inproceedings{nadler2009limit,
  author = {Nadler, Boaz and Srebro, Nathan and Zhou, Xueyuan},
  title = {Semi-supervised learning with the graph {L}aplacian: the limit of infinite unlabelled data},
  booktitle = {Advances in Neural Information Processing Systems 22},
  pages = {1330--1338},
  year = {2009}
}

@article{sethian1996fast,
  author = {Sethian, James A.},
  title = {A fast marching level set method for monotonically advancing fronts},
  journal = {Proceedings of the National Academy of Sciences},
  volume = {93},
  number = {4},
  pages = {1591--1595},
  year = {1996},
  doi = {10.1073/pnas.93.4.1591}
}

@article{tsitsiklis1995efficient,
  author = {Tsitsiklis, John N.},
  title = {Efficient algorithms for globally optimal trajectories},
  journal = {IEEE Transactions on Automatic Control},
  volume = {40},
  number = {9},
  pages = {1528--1538},
  year = {1995},
  doi = {10.1109/9.412624}
}

@article{crane2013geodesics,
  author = {Crane, Keenan and Weischedel, Clarisse and Wardetzky, Max},
  title = {Geodesics in heat: a new approach to computing distance based on heat flow},
  journal = {ACM Transactions on Graphics},
  volume = {32},
  number = {5},
  pages = {1--11},
  year = {2013},
  doi = {10.1145/2516971.2516977}
}

@inproceedings{tamar2016vin,
  author = {Tamar, Aviv and Wu, Yi and Thomas, Garrett and Levine, Sergey and Abbeel, Pieter},
  title = {Value iteration networks},
  booktitle = {Advances in Neural Information Processing Systems 29},
  year = {2016}
}

@inproceedings{eysenbach2019sorb,
  author = {Eysenbach, Benjamin and Salakhutdinov, Ruslan and Levine, Sergey},
  title = {Search on the replay buffer: bridging planning and reinforcement learning},
  booktitle = {Advances in Neural Information Processing Systems 32},
  year = {2019}
}

@inproceedings{pong2018tdm,
  author = {Pong, Vitchyr H. and Gu, Shixiang and Dalal, Murtaza and Levine, Sergey},
  title = {Temporal difference models: model-free deep {RL} for model-based control},
  booktitle = {International Conference on Learning Representations},
  year = {2018}
}

@inproceedings{nasiriany2019planning,
  author = {Nasiriany, Soroush and Pong, Vitchyr H. and Lin, Steven and Levine, Sergey},
  title = {Planning with goal-conditioned policies},
  booktitle = {Advances in Neural Information Processing Systems 32},
  year = {2019}
}

@article{mahadevan2007proto,
  author = {Mahadevan, Sridhar and Maggioni, Mauro},
  title = {Proto-value functions: a {L}aplacian framework for learning representation and control in {M}arkov decision processes},
  journal = {Journal of Machine Learning Research},
  volume = {8},
  pages = {2169--2231},
  year = {2007}
}

@article{dayan1993successor,
  author = {Dayan, Peter},
  title = {Improving generalization for temporal difference learning: the successor representation},
  journal = {Neural Computation},
  volume = {5},
  number = {4},
  pages = {613--624},
  year = {1993},
  doi = {10.1162/neco.1993.5.4.613}
}

@article{stachenfeld2017predictive,
  author = {Stachenfeld, Kimberly L. and Botvinick, Matthew M. and Gershman, Samuel J.},
  title = {The hippocampus as a predictive map},
  journal = {Nature Neuroscience},
  volume = {20},
  pages = {1643--1653},
  year = {2017},
  doi = {10.1038/nn.4650}
}

@article{singh1994upper,
  author = {Singh, Satinder P. and Yee, Richard C.},
  title = {An upper bound on the loss from approximate optimal-value functions},
  journal = {Machine Learning},
  volume = {16},
  number = {3},
  pages = {227--233},
  year = {1994},
  doi = {10.1023/A:1022693225949}
}

@techreport{williams1993tight,
  author = {Williams, Ronald J. and Baird, Leemon C.},
  title = {Tight performance bounds on greedy policies based on imperfect value functions},
  institution = {Northeastern University, College of Computer Science},
  number = {NU-CCS-93-14},
  year = {1993}
}

@inproceedings{munos2003error,
  author = {Munos, R{\'e}mi},
  title = {Error bounds for approximate policy iteration},
  booktitle = {Proceedings of the 20th International Conference on Machine Learning},
  pages = {560--567},
  year = {2003}
}

@article{munos2007performance,
  author = {Munos, R{\'e}mi},
  title = {Performance bounds in {$L_p$}-norm for approximate value iteration},
  journal = {SIAM Journal on Control and Optimization},
  volume = {46},
  number = {2},
  pages = {541--561},
  year = {2007},
  doi = {10.1137/040614384}
}

@inproceedings{farahmand2010error,
  author = {Farahmand, Amir-massoud and Munos, R{\'e}mi and Szepesv{\'a}ri, Csaba},
  title = {Error propagation for approximate policy and value iteration},
  booktitle = {Advances in Neural Information Processing Systems 23},
  pages = {568--576},
  year = {2010}
}

@article{elmoataz2017game,
  author = {Elmoataz, Abderrahim and Desquesnes, Xavier and Toutain, Matthieu},
  title = {On the game $p$-{L}aplacian on weighted graphs with applications in image processing and data clustering},
  journal = {European Journal of Applied Mathematics},
  volume = {28},
  number = {6},
  year = {2017},
  doi = {10.1017/S0956792517000122}
}

@article{camilli2013comparison,
  author = {Camilli, Fabio and Marchi, Claudio},
  title = {A comparison among various notions of viscosity solution for {H}amilton--{J}acobi equations on networks},
  journal = {Journal of Mathematical Analysis and Applications},
  year = {2013},
  doi = {10.1016/j.jmaa.2013.05.015}
}

@article{achdou2013constrained,
  author = {Achdou, Yves and Camilli, Fabio and Cutr{\`i}, Alessandra and Tchou, Nicoletta},
  title = {{H}amilton--{J}acobi equations constrained on networks},
  journal = {NoDEA Nonlinear Differential Equations and Applications},
  volume = {20},
  number = {3},
  pages = {413--445},
  year = {2013},
  doi = {10.1007/s00030-012-0158-1}
}

@book{vovk2005algorithmic,
  author = {Vovk, Vladimir and Gammerman, Alex and Shafer, Glenn},
  title = {Algorithmic learning in a random world},
  publisher = {Springer},
  year = {2005},
  doi = {10.1007/b106715}
}

@article{angelopoulos2021gentle,
  author = {Angelopoulos, Anastasios N. and Bates, Stephen},
  title = {Conformal prediction: a gentle introduction},
  journal = {Foundations and Trends in Machine Learning},
  volume = {16},
  number = {4},
  pages = {494--591},
  year = {2023},
  publisher = {Now Publishers},
  doi = {10.1561/2200000101}
}

\appendix
\section{Audit tables for the empirical claims}%
\label{app:tables}%

This appendix collects the compact tables behind the empirical
claims used in Section~\ref{sec:experiments}.  The entries below
are migrated from the locked main rollout, ordering-audit,
subdivision-verification, finite-$p$, baseline, solver,
adversarial-search, mechanism-audit, and failure-decomposition
artifacts.  File-level provenance is recorded in the supplement
manifest.

\subsection{Main rollout per-configuration AntMaze grid}%
\label{app:tables-rollout-main}%

Tables~\ref{tab:rollout-medium-grid} and~\ref{tab:rollout-large-grid} expand
the main rollout phase diagram from Section~\ref{sec:exp-rollout-main}.  Each row
aggregates five seeds, each seed contains $512$ start/goal rollout
pairs, and the methods are paired on the same graph and evaluation
pairs.  The lift column is \AMLE\ success minus harmonic success in
percentage points.  Loop columns report the method-specific fraction
of rollout outcomes recorded as \texttt{loop}.

\begin{table}[!htbp]
\centering
\caption{Main rollout per-configuration grid for AntMaze medium graph geometry
(five seeds per row; $512$ rollouts per seed).}%
\label{tab:rollout-medium-grid}%
\small
\begin{tabular}{ccrrrrr}
\toprule
$r$ & $\mathrm{lf}$ & harmonic success & \AMLE\ success & lift (pp) & harmonic loop & \AMLE\ loop \\
\midrule
$4$ & $0.02$ & $0.849$ & $0.977$ & $+12.8$ & $0.151$ & $0.023$ \\
$4$ & $0.05$ & $0.860$ & $0.996$ & $+13.6$ & $0.140$ & $0.004$ \\
$4$ & $0.08$ & $0.802$ & $0.996$ & $+19.4$ & $0.198$ & $0.004$ \\
$4$ & $0.12$ & $0.876$ & $1.000$ & $+12.4$ & $0.124$ & $0.000$ \\
$8$ & $0.02$ & $0.533$ & $0.875$ & $+34.1$ & $0.467$ & $0.125$ \\
$8$ & $0.05$ & $0.547$ & $1.000$ & $+45.3$ & $0.453$ & $0.000$ \\
$8$ & $0.08$ & $0.580$ & $0.982$ & $+40.2$ & $0.420$ & $0.018$ \\
$8$ & $0.12$ & $0.681$ & $1.000$ & $+31.9$ & $0.319$ & $0.000$ \\
$12$ & $0.02$ & $0.231$ & $0.794$ & $+56.3$ & $0.769$ & $0.206$ \\
$12$ & $0.05$ & $0.306$ & $0.973$ & $+66.7$ & $0.694$ & $0.027$ \\
$12$ & $0.08$ & $0.448$ & $0.998$ & $+55.0$ & $0.552$ & $0.002$ \\
$12$ & $0.12$ & $0.587$ & $1.000$ & $+41.3$ & $0.413$ & $0.000$ \\
\bottomrule
\end{tabular}
\end{table}

\begin{table}[!htbp]
\centering
\caption{Main rollout per-configuration grid for AntMaze large graph geometry
(five seeds per row; $512$ rollouts per seed).}%
\label{tab:rollout-large-grid}%
\small
\begin{tabular}{ccrrrrr}
\toprule
$r$ & $\mathrm{lf}$ & harmonic success & \AMLE\ success & lift (pp) & harmonic loop & \AMLE\ loop \\
\midrule
$4$ & $0.02$ & $0.831$ & $0.979$ & $+14.7$ & $0.169$ & $0.021$ \\
$4$ & $0.05$ & $0.850$ & $0.993$ & $+14.4$ & $0.150$ & $0.007$ \\
$4$ & $0.08$ & $0.834$ & $1.000$ & $+16.6$ & $0.166$ & $0.000$ \\
$4$ & $0.12$ & $0.895$ & $1.000$ & $+10.5$ & $0.105$ & $0.000$ \\
$8$ & $0.02$ & $0.458$ & $0.932$ & $+47.3$ & $0.542$ & $0.068$ \\
$8$ & $0.05$ & $0.412$ & $0.982$ & $+57.0$ & $0.588$ & $0.018$ \\
$8$ & $0.08$ & $0.468$ & $1.000$ & $+53.2$ & $0.532$ & $0.000$ \\
$8$ & $0.12$ & $0.591$ & $1.000$ & $+40.9$ & $0.409$ & $0.000$ \\
$12$ & $0.02$ & $0.167$ & $0.854$ & $+68.7$ & $0.833$ & $0.146$ \\
$12$ & $0.05$ & $0.261$ & $0.952$ & $+69.1$ & $0.739$ & $0.048$ \\
$12$ & $0.08$ & $0.386$ & $0.996$ & $+61.0$ & $0.614$ & $0.004$ \\
$12$ & $0.12$ & $0.561$ & $1.000$ & $+43.9$ & $0.439$ & $0.000$ \\
\bottomrule
\end{tabular}
\end{table}

\subsection{Ordering audit by decision scope}%
\label{app:tables-ordering}%

Table~\ref{tab:ordering-scope-grid} reports the neighbour
Kendall-$\taunbr$ ordering audit under the two decision scopes
summarised in Section~\ref{sec:experiments}.  The \texttt{all}
scope evaluates local ordering over all sampled neighbour
decisions in the reconstructed graph cases.  The
\texttt{eval\_rollouts} scope weights decisions by the greedy
rollouts used in the main rollout experiment.

\begin{table}[!htbp]
\centering
\caption{Ordering audit by decision scope.  Deltas are
\AMLE\ minus harmonic, so negative values are favorable for
\texttt{tau\_lt\_05\_rate}, \texttt{mean\_beta\_true\_gap}, and
\texttt{positive\_gap\_rate}; positive values are favorable for
\texttt{best\_agree\_rate} and \texttt{tau\_mean}.}%
\label{tab:ordering-scope-grid}%
\small
\begin{tabular}{llrrr}
\toprule
scope & metric & harmonic & \AMLE\ & delta \\
\midrule
\texttt{all} & \texttt{tau\_lt\_05\_rate} & $0.0986$ & $0.0747$ & $-0.0239$ \\
\texttt{all} & \texttt{best\_agree\_rate} & $0.9703$ & $0.9831$ & $+0.0128$ \\
\texttt{all} & \texttt{mean\_beta\_true\_gap} & $0.0594$ & $0.0338$ & $-0.0255$ \\
\texttt{all} & \texttt{positive\_gap\_rate} & $0.0297$ & $0.0169$ & $-0.0128$ \\
\texttt{all} & \texttt{tau\_mean} & $0.7388$ & $0.8029$ & $+0.0641$ \\
\texttt{eval\_rollouts} & \texttt{tau\_lt\_05\_rate} & $0.0638$ & $0.0149$ & $-0.0489$ \\
\texttt{eval\_rollouts} & \texttt{best\_agree\_rate} & $0.9754$ & $0.9969$ & $+0.0215$ \\
\texttt{eval\_rollouts} & \texttt{mean\_beta\_true\_gap} & $0.0491$ & $0.0062$ & $-0.0429$ \\
\texttt{eval\_rollouts} & \texttt{positive\_gap\_rate} & $0.0246$ & $0.0031$ & $-0.0215$ \\
\texttt{eval\_rollouts} & \texttt{tau\_mean} & $0.7672$ & $0.8271$ & $+0.0599$ \\
\bottomrule
\end{tabular}
\end{table}

\subsection{Subdivision verification table}%
\label{app:tables-subdivision}%

Table~\ref{tab:subdivision-grid} records the script-verified
subdivision sweep used by Example~\ref{ex:3.X} and
Corollary~\ref{cor:subdivision-equivariance}.  The true greedy branch from
coarse state $4$ is branch $3$ for all $k$.  Harmonic chooses branch
$1$ for all tested subdivisions, while \AMLE\ chooses branch $3$.

\begin{table}[!htbp]
\centering
\caption{Subdivision-stability verification on the seven-node
counterexample.  Margins are the method-specific first-step value gap
favoring the listed branch; all verification flags are true in
the archived subdivision-verification summary.}%
\label{tab:subdivision-grid}%
\small
\begin{tabular}{crrrr}
\toprule
subdivision $k$ & true branch & harmonic branch / margin & \AMLE\ branch / margin & verified \\
\midrule
$1$ & $3$ & $1$ / $0.1034$ & $3$ / $0.3333$ & yes \\
$2$ & $3$ & $1$ / $0.1034$ & $3$ / $0.3333$ & yes \\
$4$ & $3$ & $1$ / $0.1034$ & $3$ / $0.3333$ & yes \\
$8$ & $3$ & $1$ / $0.1034$ & $3$ / $0.3333$ & yes \\
$16$ & $3$ & $1$ / $0.1034$ & $3$ / $0.3333$ & yes \\
\bottomrule
\end{tabular}
\end{table}

The subdivision-verification table supports only the fixed-graph refinement-stability
statement in Example~\ref{ex:3.X} /
Corollary~\ref{cor:subdivision-equivariance}.  It is not used to claim that
the main rollout experiment's changing-resolution AntMaze sweep is a theorem consequence of
the seven-node example.

\subsection{Finite-\texorpdfstring{$p$}{p} Laplacian sweep}%
\label{app:tables-p-sweep}%

Table~\ref{tab:p-sweep-by-r} aggregates the finite-$p$ family sweep by resolution
$r$, supporting the §\ref{sec:exp-family-sweep} finite-$p$ baseline
discussion.  Each $(r, p)$ cell aggregates two mazes, four label
fractions, and five seeds ($40$ configurations); the $p = 2$ and
$p = \infty$ columns are main rollout references (harmonic and \AMLE\ from
Table~\ref{tab:rollout-medium-grid}--\ref{tab:rollout-large-grid} aggregated
identically).

\begin{table}[!htbp]
\centering
\caption{Finite-$p$ graph Laplacian success aggregated by
resolution $r$.  Means $\pm$ standard deviations over the
$40$-configuration aggregate at each $r$.  $p = 2$ and $p = \infty$
columns are main rollout references.}%
\label{tab:p-sweep-by-r}%
\small
\begin{tabular}{crrrrr}
\toprule
$r$ & $p = 2$ (harm.\ ref.) & $p = 4$ & $p = 8$ & $p = 16$ & $p = \infty$ (\AMLE\ ref.) \\
\midrule
$4$  & $0.850$ & $0.977 \pm 0.023$ & $0.973 \pm 0.024$ & $0.972 \pm 0.025$ & $0.993$ \\
$8$  & $0.534$ & $0.890 \pm 0.100$ & $0.980 \pm 0.031$ & $0.987 \pm 0.016$ & $0.971$ \\
$12$ & $0.368$ & $0.841 \pm 0.126$ & $0.967 \pm 0.046$ & $0.987 \pm 0.012$ & $0.946$ \\
\bottomrule
\end{tabular}
\end{table}

L-BFGS-B optimizer convergence flags (scipy \texttt{success} on the
$p$-Dirichlet energy minimisation) per $p$ across all $120$ cells
are: $p = 4$: $95 / 120$ converged ($\text{mean }\|\nabla E_p\|_\infty
= 0.0016$); $p = 8$: $10 / 120$ ($0.0082$); $p = 16$: $0 / 120$
($0.023$).  Mean per-cell max midrange-style $p$-residual at termination
(maximum across cells in parentheses):
$p = 4$: $0.04$ ($0.07$); $p = 8$: $0.29$ ($0.40$); $p = 16$:
$0.61$ ($0.77$).  These flags
indicate the energy-minimisation routine did not reach scipy's
internal tolerance at high $p$ within the iteration budget,
although the resulting surrogates still produce high greedy
success.  See §\ref{sec:exp-family-sweep} for interpretation.

\subsection{Reference baselines: nearest-label and full-graph Dijkstra}%
\label{app:tables-baselines}%

Table~\ref{tab:baselines-by-r} reports the reference baselines aggregated by
resolution $r$, comparing sparse-label \AMLE\ (main rollout reference)
against a nearest-label Voronoi assignment (using the same sparse
$\Gamma_g$ as \AMLE) and a full-graph oracle Dijkstra.  Each
$(r, \text{method})$ cell aggregates the same
$40$-configuration grouping used in Table~\ref{tab:p-sweep-by-r}.

\begin{table}[!htbp]
\centering
\caption{Reference baselines aggregated by resolution $r$.
Nearest-label uses the same sparse boundary as \AMLE; oracle
Dijkstra uses full-graph shortest-path information.}%
\label{tab:baselines-by-r}%
\small
\begin{tabular}{crrr}
\toprule
$r$ & \AMLE\ (sparse, main rollout) & nearest-label & oracle Dijkstra \\
\midrule
$4$  & $0.993$ & $0.038 \pm 0.011$ & $1.000 \pm 0.000$ \\
$8$  & $0.971$ & $0.031 \pm 0.009$ & $1.000 \pm 0.000$ \\
$12$ & $0.946$ & $0.027 \pm 0.009$ & $1.000 \pm 0.000$ \\
\bottomrule
\end{tabular}
\end{table}

The sparse-label \AMLE-vs-oracle gap is $0.7$~pp at $r = 4$,
$2.9$~pp at $r = 8$, and $5.4$~pp at $r = 12$; the gap grows with
graph size but remains small relative to the $+93.9$~pp aggregate
margin \AMLE\ holds over nearest-label.

\subsection{\AMLE\ iteration audit}%
\label{app:tables-amle-iter}%

Table~\ref{tab:amle-iteration} reports the \AMLE\ iteration audit:
on $24$ representative cells (two mazes, $r \in \{8, 12\}$,
$\mathrm{lf} \in \{0.02, 0.08\}$, three seeds), \AMLE\ is rerun at
seven iteration budgets with frozen random state across budgets
(paired $\Gamma_g$ and start--goal pairs).  Each row aggregates
$24$ cell-level success / loop / residual values.

\begin{table}[!htbp]
\centering
\caption{\AMLE\ iteration audit on a $24$-cell
representative subset; means $\pm$ standard deviations over the
$24$ cells.  $\| R_\infty \|_\infty$ is the maximum midrange
residual at solver termination, averaged over cells.}%
\label{tab:amle-iteration}%
\small
\begin{tabular}{crrrr}
\toprule
iterations & success & loop share & mean $\| R_\infty \|_\infty$ & max $\| R_\infty \|_\infty$ \\
\midrule
$50$    & $0.707 \pm 0.196$ & $0.293$ & $5.5 \times 10^{-2}$ & $1.1 \times 10^{-1}$ \\
$100$   & $0.808 \pm 0.166$ & $0.192$ & $3.2 \times 10^{-2}$ & $6.8 \times 10^{-2}$ \\
$200$   & $0.894 \pm 0.116$ & $0.106$ & $1.7 \times 10^{-2}$ & $3.4 \times 10^{-2}$ \\
$500$   & $0.970 \pm 0.049$ & $0.030$ & $6.2 \times 10^{-3}$ & $1.7 \times 10^{-2}$ \\
$1000$  & $0.996 \pm 0.014$ & $0.004$ & $2.3 \times 10^{-3}$ & $9.5 \times 10^{-3}$ \\
$2000$  & $1.000 \pm 0.002$ & $0.000$ & $7.3 \times 10^{-4}$ & $3.0 \times 10^{-3}$ \\
$5000$  & $1.000 \pm 0.000$ & $0.000$ & $9.2 \times 10^{-5}$ & $3.8 \times 10^{-4}$ \\
\bottomrule
\end{tabular}
\end{table}

The $95\%$-of-converged-success stabilisation threshold sits at
iteration $500$ (median across cells), matching the approximate
iteration budget used by the main rollout runner.  The main
rollout \AMLE\
aggregate $0.970$ is therefore consistent with a partially-
converged solver; longer iteration drives the observed rollout
failure rate to zero on this $24$-cell subset.

\subsection{Finite-\texorpdfstring{$p$}{p} solver audit}%
\label{app:tables-solver-audit}%

Table~\ref{tab:solver-audit} reports the solver audit
supporting the §\ref{sec:exp-family-sweep} finite-$p$ caveat.  The
$24$-cell representative subset (mazes $\in$ \{medium, large\},
$r \in \{8, 12\}$, $\mathrm{lf} \in \{0.02, 0.08\}$, seeds
$\in \{54, 55, 56\}$) is solved at each $p \in \{4, 8, 16\}$ by
three solvers: L-BFGS-B with a $50{,}000$-iteration cap and
$\nabla E_p$ tolerance $10^{-6}$; fixed-point Picard on the
$p$-Laplacian update at $5000$ outer sweeps with relaxation
$\omega = 0.05$ and update-tolerance $10^{-6}$; and Newton-CG with
a $20$-iteration cap (an intentionally tight budget chosen as a
curvature-stress diagnostic, not a competitive solver).

\begin{table}[!htbp]
\centering
\caption{Finite-$p$ solver audit on a $24$-cell representative
subset.  ``conv.\@'' is the count of cells the solver reports as
converged ($n = 24$).  $\| R_p \|_\infty$ is the $L^\infty$ norm of
the $p$-Laplacian update residual at termination, averaged over
cells.  Newton-CG with a $20$-iteration cap is included as a
curvature-stress diagnostic and is not a competitive solver at
high $p$.}%
\label{tab:solver-audit}%
\small
\begin{tabular}{cl rrr}
\toprule
$p$ & solver & success & conv.\@ & mean $\| R_p \|_\infty$ \\
\midrule
$4$  & L-BFGS-B ($50{,}000$ cap)    & $0.829 \pm 0.125$ & $24 / 24$ & $0.019$ \\
$4$  & fixed-point ($5000$ sweeps)  & $0.827 \pm 0.127$ & $10 / 24$ & $0.003$ \\
$4$  & Newton-CG ($20$ cap)         & $0.829 \pm 0.125$ & $5 / 24$  & $0.036$ \\
\midrule
$8$  & L-BFGS-B ($50{,}000$ cap)    & $0.958 \pm 0.049$ & $23 / 24$ & $0.178$ \\
$8$  & fixed-point ($5000$ sweeps)  & $0.965 \pm 0.047$ & $11 / 24$ & $0.002$ \\
$8$  & Newton-CG ($20$ cap)         & $0.908 \pm 0.105$ & $0 / 24$  & $0.834$ \\
\midrule
$16$ & L-BFGS-B ($50{,}000$ cap)    & $0.982 \pm 0.015$ & $16 / 24$ & $0.436$ \\
$16$ & fixed-point ($5000$ sweeps)  & $0.997 \pm 0.004$ & $15 / 24$ & $0.00013$ \\
$16$ & Newton-CG ($20$ cap)         & $0.551 \pm 0.219$ & $0 / 24$  & $1.646$ \\
\bottomrule
\end{tabular}
\end{table}

The two well-behaved solvers (L-BFGS-B at the $50{,}000$ cap and
fixed-point at $5000$ sweeps) report similar success on
$p \in \{4, 8\}$ and qualitatively reproduce the finite-$p$
family sweep's high $p = 16$ success.  They disagree quantitatively at $p = 16$
in a revealing way: L-BFGS-B reports gradient-converged solutions
($\|\nabla E_p\|_\infty \sim 5 \cdot 10^{-4}$) whose $p$-Laplacian
update residual is still order $0.4$, while fixed-point drives
the update residual to $\sim 10^{-4}$ but does not certify the
gradient.  Newton-CG with a $20$-iteration cap fails to converge
anywhere at $p \in \{8, 16\}$, yielding success values that are
solver artefacts.  Per the §\ref{sec:exp-family-sweep} interpretation,
this cross-solver disagreement at $p = 16$ means the finite-$p$ family sweep lift of
$p = 16$ over $p = \infty$ cannot be read as a planner-admissibility
advantage of finite $p$; rather, the $p = \infty$ endpoint is the
principled choice because its local update rule does not suffer
the high-$p$ ill-conditioning that prevents a clean comparison
across solvers.

\FloatBarrier
\subsection{Operator-compatibility mechanism audit}%
\label{app:tables-mechanism}%

The mechanism audit recomputes the ordering-audit rollout-weighted
decision scopes on the full $120$-cell main rollout grid and
evaluates the true shortest-path distance
$d_g$ under two local operators (the operator-level objects of
Corollary~\ref{cor:harmonic-residual} and
Proposition~\ref{prop:3.7}).  The local geometry class is
\AMLE-compatible but harmonic-incompatible when the \AMLE\ 
midrange residual $\mathcal{A}[d_g](x) - d_g(x)$ vanishes while
the harmonic averaging residual
$\frac{1}{\deg(x)}\sum_{y \sim x} d_g(y) - d_g(x)$ is non-zero
over the greedy neighbor scope, i.e.\ the $n_+(x) \ne n_-(x)$
regime of Corollary~\ref{cor:harmonic-residual}.  Harmonic inversions are
counted only when the true-best neighbor is non-tied and harmonic
strictly ranks a worse neighbor ahead of it; \AMLE\ correction
means that \AMLE\ selects a true-best neighbor on that same
decision.  The certified-correction column is the single-state \AMLE\
local-admissibility test from Lemma~\ref{lem:3.6.X.2} firing,
i.e.\ $\epsilon_s^{\AMLE} < \Delta^\star_s / 2$ at the
inverted state (with \AMLE\ error bounded as in
Corollary~\ref{cor:amle-admissibility}), expressed as a fraction
of harmonic-inverted states.  Harmonic anti-admissibility
is the Lemma~\ref{lem:harmonic-anti-admissibility} certificate
and is satisfied by definition on inversion events.

\begin{table}[!htbp]
\centering
\caption{Aggregate operator-compatibility and inversion audit.
Geometry rows are over all rollout-weighted decisions
($2{,}481{,}029$ total).  Inversion and correction rows are
conditioned on $534{,}916$ non-tied true-best decisions, except
where the denominator is explicitly the harmonic-inversion count.}%
\label{tab:mechanism-aggregate}%
\small
\begin{tabular}{p{0.57\textwidth}rr}
\toprule
diagnostic & count / denominator & rate \\
\midrule
both-compatible geometry & $1{,}079{,}361 / 2{,}481{,}029$ & $43.5\%$ \\
\AMLE-compatible / harmonic-incompatible geometry & $1{,}358{,}673 / 2{,}481{,}029$ & $54.8\%$ \\
both-incompatible geometry & $42{,}995 / 2{,}481{,}029$ & $1.7\%$ \\
harmonic inversions & $30{,}037 / 534{,}916$ & $5.6\%$ \\
inversions in primary geometry & $29{,}821 / 30{,}037$ & $99.3\%$ \\
\AMLE\ corrections of harmonic inversions & $27{,}977 / 30{,}037$ & $93.1\%$ \\
certified corrections of harmonic inversions & $20{,}135 / 30{,}037$ & $67.0\%$ \\
\bottomrule
\end{tabular}
\end{table}

\begin{table}[!htbp]
\centering
\caption{Mechanism audit by graph resolution $r$.  All entries
are percentages; correction and certificate rates are conditional on
harmonic inversions at the given resolution.}%
\label{tab:mechanism-by-r}%
\small
\begin{tabular}{crrrr}
\toprule
$r$ & primary geometry & harmonic inversion & \AMLE\ correction & certified correction \\
\midrule
$4$  & $84.4$ & $4.7$ & $88.4$ & $39.8$ \\
$8$  & $56.6$ & $5.5$ & $94.8$ & $71.7$ \\
$12$ & $38.5$ & $6.5$ & $94.9$ & $81.6$ \\
\bottomrule
\end{tabular}
\end{table}

The primary \AMLE-compatible / harmonic-incompatible geometry is most
common at the coarsest resolution, but harmonic inversions remain
present at all resolutions and become more often certified by the
local-gap condition as $r$ increases.  The audit is deliberately
local and distributional: it supports the operator-mismatch reading
of the AntMaze rollouts, not universal \AMLE\ dominance.

\FloatBarrier
\subsection{\texorpdfstring{$p$}{p}-family ordering summary}%
\label{app:tables-p-family}%

The $p$-family ordering summary is summary-only: it merges the
ordering-audit local-ordering diagnostics for $p = 2$ and
$p = \infty$ with the finite-$p$ family-sweep summaries
for $p \in \{4, 8, 16\}$ on the same $120$ case keys.
It does not compute finite-$p$ local-gap certificates, and the
$p = 16$ rows retain the finite-$p$ sweep caveat that L-BFGS-B
was run at the $700$-iteration budget without scipy optimizer
convergence.

\begin{table}[!htbp]
\centering
\caption{Summary-only $p$-family ordering table over the
$120$-cell main rollout grid.  The local-ordering columns are
scope-normalized ordering-audit / finite-$p$ sweep diagnostics;
low-$\taunbr$ is
\texttt{tau\_lt\_05\_rate}.}%
\label{tab:p-family-summary}%
\small
\begin{tabular}{crrrrrr}
\toprule
$p$ & success & loop & low-$\taunbr$ & best agree & mean gap & mean $\taunbr$ \\
\midrule
$2$        & $0.584$ & $0.416$ & $0.081$ & $0.973$ & $0.054$ & $0.753$ \\
$4$        & $0.903$ & $0.097$ & $0.051$ & $0.987$ & $0.026$ & $0.775$ \\
$8$        & $0.973$ & $0.027$ & $0.044$ & $0.987$ & $0.025$ & $0.777$ \\
$16$       & $0.982$ & $0.018$ & $0.043$ & $0.986$ & $0.027$ & $0.776$ \\
$\infty$  & $0.970$ & $0.030$ & $0.045$ & $0.990$ & $0.020$ & $0.815$ \\
\bottomrule
\end{tabular}
\end{table}

The family summary supports the same qualitative transition as
the p-family sweep and solver audit: the high-$p$ cluster has
much lower local-ordering tail
mass than $p = 2$.  It is not monotone enough to be treated as a
finite-$p$ theorem.  Across the $120$ paired cases, success is
monotone nondecreasing in only $33 / 120$ cases, mean $\taunbr$ is
monotone nondecreasing in $17 / 120$, and $p = 16$ lies between
$p = 8$ and $p = \infty$ in success on $57 / 120$ cases.  These
failures are reported as diagnostics, not anomalies requiring
post-hoc filtering.

\FloatBarrier
\subsection{Adversarial \AMLE-versus-harmonic subgraph search}%
\label{app:tables-adversarial}%

The adversarial subgraph search performs a random search for small-graph instances on which
\AMLE\ underperforms harmonic, complementing the
Example~\ref{ex:3.X} witness in the opposite direction.  Search
candidates are random connected induced subgraphs of $4 \times 4$,
$5 \times 4$, and $5 \times 5$ unweighted lattices; for each
candidate, the boundary $\Gamma_g$ is set to the goal vertex plus
$1$--$4$ additional non-goal vertices labelled by their true
shortest-path distance, and both surrogates are run to update-norm
tolerance $10^{-8}$.  Of $588$ tested $4 \times 4$ candidates with
random seed $20260510$, $5$ exhibit harmonic-greedy success
$1.000$ versus \AMLE-greedy success $\le 0.83$.  The mechanism is
common across the five witnesses: \AMLE\ produces a high-value
plateau on a pair of interior vertices whose only nearby
labelled vertex is the high-cost boundary, and the plateau
inverts the local greedy ordering between the plateau vertices.

Table~\ref{tab:adversarial-witnesses} summarises the five witnesses.
Full edge lists, boundary values, and per-vertex surrogate values
for each are archived with the adversarial-search summary.

\begin{table}[!htbp]
\centering
\caption{Adversarial \AMLE-bad / harmonic-good witnesses on
$4 \times 4$ subgraphs.  $|V|$ is the candidate's vertex count,
goal is the labelled goal vertex, $|\partial|$ is the number of
labelled boundary vertices, and the success columns count greedy
rollouts launched from each non-boundary vertex.  Failures are
listed by interior-vertex label.}%
\label{tab:adversarial-witnesses}%
\small
\begin{tabular}{ccccccl}
\toprule
\# & $|V|$ & goal & $|\partial|$ & harm.\ succ.\ & \AMLE\ succ.\ & \AMLE\ failures \\
\midrule
$1$ & $13$ & $10$ & $5$ & $1.000$ & $0.833$ & $\{12, 13\}$ \\
$2$ & $14$ & $6$  & $4$ & $0.769$ & $0.692$ & $\{8, 9, 12, 13\}$ \\
$3$ & $13$ & $15$ & $4$ & $1.000$ & $0.833$ & $\{1, 5\}$ \\
$4$ & $16$ & $3$  & $5$ & $1.000$ & $0.800$ & $\{8, 12, 13\}$ \\
$5$ & $13$ & $2$  & $5$ & $0.750$ & $0.667$ & $\{7, 11, 14, 15\}$ \\
\bottomrule
\end{tabular}
\end{table}

Witnesses \#1, \#3, \#4 have harmonic at the perfect-success
ceiling and \AMLE\ strictly below; witnesses \#2 and \#5 show both
surrogates failing on overlapping interior sets, with harmonic
still strictly better.  These five small-graph witnesses are
existence proofs that the converse direction (\AMLE-bad,
harmonic-good) is realisable on lattice subgraphs with random
sparse boundaries; combined with Example~\ref{ex:3.X}, they
establish that neither $\AMLE$ nor harmonic universally dominates
on sparse-boundary graph planning.  The main rollout AntMaze aggregate
($+38.6$~pp paired lift in favour of \AMLE) is therefore a
\emph{distributional} statement about AntMaze graph geometry, not
a universal-dominance theorem.

\FloatBarrier


\section{Proofs of §3 statements}
\label{app:proofs}

This appendix gathers proof bodies for the theorems, lemmas, propositions, and corollaries stated in Section~\ref{sec:admissibility}.  Each is keyed by its label in the main text.

\subsection*{Statement and proof of Corollary~\ref{cor:3.6.X.3}}
\label{app:cor-3.6.X.3}

\begin{corollary}[End-to-end ordinal fidelity, restated]
\label{cor:3.6.X.3}
Combining Lemmas~\ref{lem:3.6.X.1} and~\ref{lem:3.6.X.2} with
$A = N(s)$,
\[
  \epsilon_s \;\le\; \epslab + \etainf(\Vstar; N(s), \Gamma_g)
  \;\le\; \epslab + 2 L_g \, h_{\Gamma_g}(N(s)),
\]
and, for any non-leaf state $s$ ($d(s) \ge 2$),
\[
  \taunbr^{\textsc{AMLE}}(s, g)
  \;\ge\; 1 - \frac{4 \, M_{s, g}(2\epslab + 4 L_g \, h_{\Gamma_g}(N(s)))}{d(s)(d(s) - 1)}.
\]
\end{corollary}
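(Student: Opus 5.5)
The plan is to chain Lemma~\ref{lem:3.6.X.1} into the Kendall-$\tau$ clause of Lemma~\ref{lem:3.6.X.2}, using monotonicity of the small-gap counter $M_{s,g}(\cdot)$. Throughout, $\epsilon_s$ denotes the \AMLE\ local error, so the surrogate is $\Vhat = \hat V_\infty$.

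\emph{Step 1 (error chain).} Apply Lemma~\ref{lem:3.6.X.1} with $A = N(s)$. One technicality: the lemma is stated for $A \subseteq V \setminus \Gamma_g$, while $N(s)$ may contain labelled vertices. On $N(s) \cap \Gamma_g$ the surrogate equals $Y_g$, so the error there is at most $\epslab$ by the label-noise hypothesis. On $N(s) \setminus \Gamma_g$ the lemma gives $\epslab + \etainf(\Vstar; N(s)\setminus\Gamma_g, \Gamma_g)$. I would read $\etainf(\Vstar; N(s), \Gamma_g)$ as the sup over $N(s)\setminus\Gamma_g$, where the \AMLE\ residual vanishes on $\Gamma_g$ because $\mathcal{A}_{\Gamma_g}$ is boundary-pinned. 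Taking the max over $N(s)$ gives the first inequality, since $\epsilon_s = \max_{y\sim s}|\hat V_\infty(y)-\Vstar(y)|$ by definition. The second inequality is the fill-distance clause of Lemma~\ref{lem:3.6.X.1}, also with $A = N(s)$, under the stated $L_g$-Lipschitz hypothesis. This needs $h_{\Gamma_g}(N(s))$ to dominate the fill distance of the interior part, which holds because the max in its definition ranges over a superset.

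\emph{Step 2 (ordinal bound).} For $d(s)\ge 2$, Lemma~\ref{lem:3.6.X.2} gives $\taunbr(s,g) \ge 1 - 4M_{s,g}(2\epsilon_s)/(d(s)(d(s)-1))$. The map $\eta \mapsto M_{s,g}(\eta) = \#\{(i,j): i<j,\ |Q^\star_i - Q^\star_j|\le\eta\}$ is nondecreasing, because enlarging $\eta$ only adds pairs. Doubling the Step 1 bound gives $2\epsilon_s \le 2\epslab + 4L_g h_{\Gamma_g}(N(s))$, hence $M_{s,g}(2\epsilon_s) \le M_{s,g}(2\epslab + 4L_g h_{\Gamma_g}(N(s)))$. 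Substituting this into the lower bound, and noting the coefficient $-4/(d(s)(d(s)-1))$ is negative, preserves the inequality direction and yields the display.

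\emph{Main obstacle.} The only genuinely delicate step is the first: making sure Lemma~\ref{lem:3.6.X.1} legitimately applies with $A = N(s)$ when $N(s)$ meets $\Gamma_g$ (handled by the boundary case above). A further check is that the Kendall-$\tau$ clause of Lemma~\ref{lem:3.6.X.2} holds for an arbitrary error level $\epsilon_s$, not just when $\epsilon_s<\Delta^\star_s/2$. Its proof is pairwise: an inversion at a pair forces that pair's true gap to be at most $2\epsilon_s$, so the count bound holds unconditionally. I would verify this, including the tie convention for pairs counted in $M_{s,g}(0)$.
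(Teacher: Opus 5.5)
Your proposal is correct and takes the same route as the paper. The paper's proof is the one-line chain of Lemma~\ref{lem:3.6.X.1} with $A = N(s)$ into the Kendall-$\tau$ clause of Lemma~\ref{lem:3.6.X.2}. Your extra steps fill in what the paper's ``immediate'' leaves implicit: the case $N(s) \cap \Gamma_g \neq \emptyset$ (where the lemma's hypothesis $A \subseteq V \setminus \Gamma_g$ formally fails), the monotonicity of $M_{s,g}(\cdot)$, and the fact that the Kendall-$\tau$ bound does not need $\epsilon_s < \Delta^\star_s/2$.
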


\begin{proof}
Immediate from Lemma~\ref{lem:3.6.X.1} applied with $A = N(s)$
(controlling $\epsilon_s$ in terms of fill distance) and
Lemma~\ref{lem:3.6.X.2} (converting $\epsilon_s$ to a
neighbour-Kendall-$\tau$ bound).
\end{proof}

\subsection*{Proof of Lemma~\ref{lem:3.6.3} (\texorpdfstring{$p$-Laplacian}{p-Laplacian} maximum principle)}
\label{app:proof-lem-3-6-3}

\begin{proof}
We prove the no-strict-minimum claim; the no-strict-maximum claim
is identical with inequalities reversed.  Suppose for contradiction
that $x \in V \setminus \Gamma_g$ is a strict interior local
minimum of $u$, i.e.\ $u(y) > u(x)$ for every $y \sim x$.

\emph{Case $p = 2$ (harmonic).}  Strict minimality gives
$u(y) - u(x) > 0$ for every $y \sim x$, so
$\sum_{y \sim x}(u(y) - u(x)) > 0$, contradicting the harmonic
equation $\sum_{y \sim x}(u(y) - u(x)) = 0$.

\emph{Case $2 < p < \infty$.}  Strict minimality gives both
$|u(y) - u(x)| > 0$ and $u(y) - u(x) > 0$ for every $y \sim x$, so
$|u(y) - u(x)|^{p - 2}(u(y) - u(x)) > 0$.  Hence
$\sum_{y \sim x} |u(y) - u(x)|^{p - 2}(u(y) - u(x)) > 0$,
contradicting $\Delta_p u(x) = 0$.

\emph{Case $p = \infty$ (\AMLE).}  Strict minimality gives
$\min_{y \sim x} u(y) > u(x)$ and $\max_{y \sim x} u(y) > u(x)$,
so the midrange satisfies
$\tfrac{1}{2}(\min_y u(y) + \max_y u(y)) > u(x)$, contradicting
the \AMLE\ identity $u(x) = \tfrac{1}{2}(\min_y u(y) + \max_y u(y))$.
\end{proof}

\subsection*{Statement and proof of Proposition~\ref{prop:local-separation} (combined local separation)}
\label{app:proof-prop-local-separation}

\begin{proposition}[Combined \AMLE-vs-harmonic local separation]
\label{prop:local-separation}
Fix a decision state $s$ with true optimal neighbour set
$A^\star(s, g)$.  Suppose
\begin{enumerate}[label=(\roman*), itemsep=0pt, topsep=0pt]
\item (\emph{\AMLE\ local admissibility,
Lemma~\ref{lem:3.6.X.2}}) $\epsilon_s^{\AMLE} := \max_{y \sim s}
|\hat V_\infty(y, g) - \Vstar(y, g)| < \Delta^\star_s / 2$, and
\item (\emph{harmonic local anti-admissibility,
Lemma~\ref{lem:harmonic-anti-admissibility}}) there exists
$b \in N(s) \setminus A^\star(s, g)$ with
$Q_{\hat u_2, g}(s, b) < \min_{a \in A^\star(s, g)}
Q_{\hat u_2, g}(s, a)$, equivalently
$w(s, b) - w(s, a) + \sum_{z \in \Gamma_g}
(\omega_b(z) - \omega_a(z)) Y_g(z) < 0$ for every
$a \in A^\star(s, g)$.
\end{enumerate}
Then $T_{\hat V_\infty, g}(s) \in A^\star(s, g)$ (\AMLE-greedy
correct) and $T_{\hat u_2, g}(s) \notin A^\star(s, g)$
(harmonic-greedy wrong).
\end{proposition}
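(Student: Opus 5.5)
The plan is to prove the two conclusions separately, since each follows from one earlier result. No new estimate is needed.

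\emph{\AMLE\ side.} Apply Lemma~\ref{lem:3.6.X.2} at $s$ with $\Vhat = \hat V_\infty$. Hypothesis (i) is exactly $\epsilon_s < \Delta^\star_s/2$ for this surrogate. The lemma then gives $\arg\min_{y \sim s} Q_{\hat V_\infty, g}(s, y) \subseteq A^\star(s, g)$. The successor $T_{\hat V_\infty, g}(s)$ is an element of this argmin set for any consistent tie-breaking rule, since tie-breaking only selects among minimisers. Hence $T_{\hat V_\infty, g}(s) \in A^\star(s, g)$.

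One point deserves care. Hypothesis (i) must imply $\Delta^\star_s$ is finite, or else be vacuous. If $N(s) = A^\star(s,g)$, then $\Delta^\star_s = +\infty$, and (i) holds trivially. In that case, however, hypothesis (ii) cannot hold, because it requires some $b \in N(s) \setminus A^\star(s,g)$. So under the joint hypotheses $\Delta^\star_s$ is a genuine finite positive gap.

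\emph{Harmonic side.} By (ii) there is a $b \notin A^\star(s,g)$ with $Q_{\hat u_2,g}(s,b) < Q_{\hat u_2,g}(s,a)$ for all $a \in A^\star(s,g)$. Therefore no element of $A^\star(s,g)$ attains $\min_{y \sim s} Q_{\hat u_2, g}(s, y)$, since $b$ strictly beats each of them. The whole argmin set is thus contained in $N(s) \setminus A^\star(s,g)$. Again tie-breaking selects inside the argmin, so $T_{\hat u_2,g}(s) \notin A^\star(s,g)$.

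Note that the selected successor need not be $b$ itself; the argument only needs that it lies outside $A^\star$. The equivalent harmonic-measure form of (ii) is just the substitution from Lemma~\ref{lem:harmonic-anti-admissibility}, so it needs no separate treatment.

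The only step that needs any care is the tie-breaking observation: $T$ always returns a minimiser, so set-level statements about the argmin transfer to the chosen successor. I would state this once explicitly, and also note that $s \neq g$, since a decision state with a suboptimal neighbour is non-goal. Everything else is immediate from Lemmas~\ref{lem:3.6.X.2} and~\ref{lem:harmonic-anti-admissibility}.
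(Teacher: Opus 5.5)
Your proposal is correct and follows the paper's proof. The \AMLE\ side is Lemma~\ref{lem:3.6.X.2} applied under clause (i). The harmonic side uses the strict inequality in (ii) to show that no element of $A^\star(s,g)$ attains $\min_{y \sim s} Q_{\hat u_2,g}(s,y)$, so the tie-broken successor lies outside $A^\star(s,g)$. Your extra remarks are harmless clarifications the paper leaves implicit. One small correction: $s \neq g$ is part of the hypothesis, because $A^\star(s,g)$ and Lemma~\ref{lem:3.6.X.2} are only defined at non-goal states; it does not follow from $s$ having a suboptimal neighbour.
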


\begin{proof}
Clause (i) and Lemma~\ref{lem:3.6.X.2} give $\arg\min_y Q_{\hat
V_\infty, g}(s, y) \subseteq A^\star(s, g)$.  Clause (ii) gives
$\min_y Q_{\hat u_2, g}(s, y) \le Q_{\hat u_2, g}(s, b) <
\min_{a \in A^\star} Q_{\hat u_2, g}(s, a)$, so
$T_{\hat u_2, g}(s) \notin A^\star(s, g)$.
\end{proof}

\subsection*{Statement and proof of Lemma~\ref{lem:3.6.3.B} (residual--margin bound)}
\label{app:proof-lem-3-6-3-B}

For a surrogate $\hat u : V \to \mathbb{R}$ with
$\hat u\rvert_{\Gamma_g} = Y_g$, write $R_\infty(x; \hat u) :=
\mathcal{A}[\hat u](x) - \hat u(x)$ for the midrange residual, and
$\gamma(s^\ast) := \min_{s' \sim s^\ast} \hat u(s') - \hat
u(s^\ast)$ for the strict-min depth at any approximate strict
interior local minimum $s^\ast$ of $\hat u$.

\begin{lemma}[Residual--margin bound for finite-sweep \AMLE]
\label{lem:3.6.3.B}
If $\|R_\infty(\cdot; \hat u)\|_\infty \le \delta$ then
$\gamma(s^\ast) \le \delta$ at every approximate strict interior
local minimum $s^\ast$.
\end{lemma}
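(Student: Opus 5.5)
The plan is to argue directly from the definition of the midrange residual at a single vertex. This mirrors the $p = \infty$ case of Lemma~\ref{lem:3.6.3}, but now quantitatively: the exact identity $u(x) = \mathcal{A}[u](x)$ is replaced by the approximate identity $|R_\infty(x; \hat u)| \le \delta$.

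Fix an approximate strict interior local minimum $s^\ast \in V \setminus \Gamma_g$. By this we mean an interior vertex with $\gamma(s^\ast) > 0$, i.e.\ $\hat u(s') > \hat u(s^\ast)$ for every $s' \sim s^\ast$. If $\gamma(s^\ast) \le 0$ the claim is trivial since $\delta \ge 0$, so this reading also covers a looser definition.

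First write $m := \min_{s' \sim s^\ast} \hat u(s')$ and $M := \max_{s' \sim s^\ast} \hat u(s')$. Then $m = \hat u(s^\ast) + \gamma(s^\ast)$ and $M \ge m$. Next expand the residual:
\[
  R_\infty(s^\ast; \hat u)
  \;=\; \tfrac12 (m + M) - \hat u(s^\ast)
  \;\ge\; m - \hat u(s^\ast)
  \;=\; \gamma(s^\ast).
\]
Finally, combine this with the hypothesis $R_\infty(s^\ast; \hat u) \le \|R_\infty(\cdot;\hat u)\|_\infty \le \delta$ to conclude $\gamma(s^\ast) \le \delta$. The sup-norm is taken over interior vertices, where the midrange operator is defined, and $s^\ast$ is interior, so the bound applies at $s^\ast$.

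The only step requiring care is the inequality $\tfrac12(m+M) \ge m$. It uses nothing beyond $M \ge m$, and it is where the one-sided nature of the bound enters. A sharper statement would be $\gamma(s^\ast) \le \delta - \tfrac12(M - m)$, which I might record as a remark; the upper-slope term $M - m$ only helps. I expect no genuine obstacle. The main point to get right is the convention for ``approximate strict interior local minimum'' (interior, with $\gamma > 0$), so that the statement is meaningful and consistent with Lemma~\ref{lem:3.6.3} in the limit $\delta = 0$: there it forces $\gamma(s^\ast) \le 0$, so no strict interior minimum exists.
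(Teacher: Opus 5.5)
Your proposal is correct and follows essentially the same argument as the paper: with $\gamma(s^\ast) > 0$, the minimum neighbour value is $\hat u(s^\ast) + \gamma(s^\ast)$ and the maximum is at least that, so the midrange residual is at least $\gamma(s^\ast)$, and the hypothesis $\|R_\infty\|_\infty \le \delta$ then gives the bound. Your sharper form $\gamma(s^\ast) \le \delta - \tfrac12(M - m)$ is also valid, since $R_\infty(s^\ast; \hat u) = \gamma(s^\ast) + \tfrac12(M - m)$ exactly.
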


\begin{proof}
Let $s^\ast$ be an approximate strict interior local minimum of
$\hat u$ with $\gamma(s^\ast) > 0$, so
$\min_{s' \sim s^\ast} \hat u(s') = \hat u(s^\ast) + \gamma(s^\ast)$
and every neighbor exceeds $\hat u(s^\ast)$ by at least
$\gamma(s^\ast)$, whence
$\max_{s' \sim s^\ast} \hat u(s') \ge \hat u(s^\ast) + \gamma(s^\ast)$.
Therefore
\[
  \mathcal{A}[\hat u](s^\ast)
  \;=\; \tfrac{1}{2}\bigl(\min_{s' \sim s^\ast} \hat u(s') + \max_{s' \sim s^\ast} \hat u(s')\bigr)
  \;\ge\; \hat u(s^\ast) + \gamma(s^\ast),
\]
so $R_\infty(s^\ast; \hat u) \ge \gamma(s^\ast)$.  Combining with
$\|R_\infty\|_\infty \le \delta$ gives $\gamma(s^\ast) \le \delta$.
\end{proof}

\subsection*{Proof of Lemma~\ref{lem:3.6.X.1} (\AMLE\ local extension error)}
\label{app:proof-lem-3-6-X-1}

\begin{proof}[Proof sketch]
By the boundary-pinned \AMLE\ comparison principle on graphs
\citep{kyng2015algorithms, bungert2021lipschitz},
$\| \mathcal{A}_{\Gamma_g}(z) - \mathcal{A}_{\Gamma_g}(z') \|_\infty \le \| z - z' \|_{\infty, \Gamma_g}$.
Apply with $z = Y_g$ and $z' = \Vstar\rvert_{\Gamma_g}$ to obtain
$\| \hat V_\infty - \mathcal{A}_{\Gamma_g}(\Vstar\rvert_{\Gamma_g}) \|_\infty \le \epslab$;
triangle inequality with $\etainf$ gives the first bound.  For the
fill-distance bound, \AMLE\ preserves the Lipschitz constant of the
boundary data; for any $x \in A$ and a closest labeled
$z \in \Gamma_g$, both $\mathcal{A}_{\Gamma_g}(\Vstar\rvert_{\Gamma_g})$
and $\Vstar$ change by at most $L_g \, h_{\Gamma_g}(A)$ along a path
from $z$ to $x$, so their difference at $x$ is at most
$2 L_g \, h_{\Gamma_g}(A)$.
\end{proof}

\subsection*{Proof of Lemma~\ref{lem:3.6.X} (gap-dependent local ordinal fidelity)}
\label{app:proof-lem-3-6-X}

\begin{proof}
For any pair of neighbours $y, y' \in N(s)$,
$Q_{\Vhat, g}(s, y) - Q_{\Vhat, g}(s, y')$ differs from
$Q^\star(s, y; g) - Q^\star(s, y'; g)$ by exactly
$(\Vhat(y, g) - \Vstar(y, g)) - (\Vhat(y', g) - \Vstar(y', g))$,
which has magnitude at most $2 \epsilon_s$ (the proof uses only
this local sup error bound and not any \AMLE-specific structure).
Hence if $|Q^\star(s, y; g) - Q^\star(s, y'; g)| > 2 \epsilon_s$,
the surrogate ordering on $(y, y')$ matches the true ordering and
the pair is not an inversion.  Therefore
$I_{s, g} \le M_{s, g}(2 \epsilon_s)$, and the Kendall-tau
identity gives the displayed bound.

For the action-ordering preservation part, pick any
$a \in A^\star(s, g)$ and any
$b \in N(s) \setminus A^\star(s, g)$.  By definition of $A^\star$
and $\Delta^\star_s$, $Q^\star(s, b; g) - Q^\star(s, a; g) \ge
\Delta^\star_s > 2 \epsilon_s$; the same gap argument forces
$Q_{\Vhat, g}(s, b) > Q_{\Vhat, g}(s, a)$.  Hence
$\arg\min_y Q_{\Vhat, g}(s, y) \subseteq A^\star(s, g)$.
\end{proof}

\subsection*{Statement and proof of Corollary~\ref{cor:3.6.X.4} (bad-tail mass)}
\label{app:proof-cor-3-6-X-4}

\begin{corollary}[Bad-tail mass under local-gap regularity, restated]
\label{cor:3.6.X.4}
Let $\mu_{\mathrm{dec}}$ be the planner's decision distribution
over $(s, g)$ pairs with $d(s) \ge 2$, and define the small-gap
pair fraction $G_{s, g}(\eta) := M_{s, g}(\eta) /
\binom{d(s)}{2}$.  Suppose $G_{s, g}(\eta) \le C_0 \eta^\alpha$
uniformly in $(s, g) \sim \mu_{\mathrm{dec}}$ for some $\alpha,
C_0 > 0$ (a hypothesis on the local action-gap CDF, not a
derived property).  Then for any threshold $\theta < 1$,
\[
  \mu_{\mathrm{dec}}\bigl\{ \taunbr \le \theta \bigr\}
  \;\le\; \frac{2^{\alpha + 1} C_0}{1 - \theta} \, \mathbb{E}_{\mu_{\mathrm{dec}}}[\epsilon_s^\alpha].
\]
\end{corollary}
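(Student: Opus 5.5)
The plan is to chain the pairwise Kendall-$\tau$ bound of Lemma~\ref{lem:3.6.X.2} with Markov's inequality. First, at each $(s,g)$ with $d(s)\ge 2$, Lemma~\ref{lem:3.6.X.2} gives
\[
  1-\taunbr(s,g) \;\le\; \frac{4\,M_{s,g}(2\epsilon_s)}{d(s)(d(s)-1)} \;=\; \frac{2\,M_{s,g}(2\epsilon_s)}{\binom{d(s)}{2}} \;=\; 2\,G_{s,g}(2\epsilon_s),
\]
since $\binom{d(s)}{2} = d(s)(d(s)-1)/2$. Applying the assumed small-gap regularity with $\eta = 2\epsilon_s$ then gives $1-\taunbr(s,g) \le 2 C_0 (2\epsilon_s)^\alpha = 2^{\alpha+1} C_0\,\epsilon_s^\alpha$ pointwise on the support of $\mu_{\mathrm{dec}}$.

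Next, for $\theta<1$ the event $\{\taunbr \le \theta\}$ coincides with $\{1-\taunbr \ge 1-\theta\}$. Since $1-\theta>0$, this event is contained in $\{2^{\alpha+1}C_0\epsilon_s^\alpha \ge 1-\theta\}$. Markov's inequality applied to the nonnegative random variable $\epsilon_s^\alpha$ under $\mu_{\mathrm{dec}}$ yields
\[
  \mu_{\mathrm{dec}}\{\taunbr\le\theta\} \;\le\; \frac{2^{\alpha+1}C_0}{1-\theta}\,\mathbb{E}_{\mu_{\mathrm{dec}}}[\epsilon_s^\alpha],
\]
which is the claim.

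The only step needing care is the first one. The hypothesis $G_{s,g}(\eta)\le C_0\eta^\alpha$ has to hold at the random, state-dependent argument $\eta=2\epsilon_s$; this is fine because the hypothesis is stated uniformly in $(s,g)$ and for all $\eta$. The tie convention of Lemma~\ref{lem:3.6.X.2} must also be respected: ties are counted in $M_{s,g}(0)\le M_{s,g}(2\epsilon_s)$, so the bound remains valid when $\epsilon_s=0$. Measurability and integrability are trivial, since everything is finite, and if $\mathbb{E}[\epsilon_s^\alpha]=\infty$ the bound holds vacuously.
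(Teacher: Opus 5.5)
Your proposal is correct and follows the paper's proof. Both arguments chain the Kendall-$\tau$ bound of Lemma~\ref{lem:3.6.X.2} into $1-\taunbr \le 2G_{s,g}(2\epsilon_s) \le 2^{\alpha+1}C_0\,\epsilon_s^\alpha$ and finish with Markov's inequality; the paper applies Markov to $1-\taunbr$ at level $1-\theta$ and then bounds the expectation pointwise, whereas you first enlarge the event and apply Markov to $\epsilon_s^\alpha$, which gives the same constant.
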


\begin{proof}
By Lemma~\ref{lem:3.6.X.2},
$1 - \taunbr(s, g) \le \tfrac{4}{d(d-1)} M_{s, g}(2\epsilon_s)
= 2 G_{s, g}(2 \epsilon_s) \le 2 C_0 (2\epsilon_s)^\alpha
= 2^{\alpha+1} C_0 \epsilon_s^\alpha$.
Markov inequality on $1 - \taunbr$ at level $1 - \theta$ gives the bound.
\end{proof}

\subsection*{Proof of Proposition~\ref{prop:3.7} (distance is graph-\AMLE\ on the extendable interior)}
\label{app:proof-prop-3-7}

\begin{proof}
\emph{Part (a).}  By the unit-graph triangle inequality, $|d_g(y) - d_g(x)| \le 1$
for every $y \sim x$, so $\min_{y \sim x} d_g(y) \ge d_g(x) - 1$.  Let
$x = x_0, x_1, \ldots, x_k = g$ be a shortest path with $k = d_g(x)$;
then $x_1$ is a neighbor of $x$ with $d_g(x_1) = d_g(x) - 1$, giving
the reverse inequality.

\emph{Part (b).}  By the same triangle inequality,
$\max_{y \sim x} d_g(y) \le d_g(x) + 1$, and $\max \ge \min = d_g(x) - 1$.
The case $\max = d_g(x) - 1$ does occur at non-leaf vertices in
cut-locus regions (e.g., a diamond graph: $g$ at distance $0$, two
intermediate neighbors at distance $1$, and a vertex $x$ at distance
$2$ adjacent to both intermediates --- every neighbor of $x$ is
closer to $g$).

\emph{Part (c).}  By (a), $\min_{y \sim x} d_g(y) = d_g(x) - 1$ always.
Setting $\mathcal{A}[d_g](x) = d_g(x)$ and using (b),
\[
  d_g(x) \;=\; \tfrac{1}{2}\bigl((d_g(x) - 1) + \max_{y \sim x} d_g(y)\bigr)
  \;\iff\; \max_{y \sim x} d_g(y) = d_g(x) + 1
  \;\iff\; x \in V^{\mathrm{ext}}_g.
\]
The Bellman / eikonal identity at $x \neq g$ follows directly from
(a).
\end{proof}

\subsection*{Statement and proof of Corollary~\ref{cor:subdivision-equivariance} (subdivision equivariance of strict harmonic rankings)}
\label{app:proof-cor-subdivision}

\begin{corollary}[Subdivision-equivariance of strict harmonic rankings]
\label{cor:subdivision-equivariance}
Strict harmonic neighbour rankings of the type described by
Lemma~\ref{lem:harmonic-measure} are preserved under uniform
$k$-subdivision of $G$ with label scaling $\lambda_k = k$
(metric-preserving).  In particular, the inversion of
Example~\ref{ex:3.X} persists at every $k \ge 1$.  The
analogous statement on the \AMLE\ side is
Lemma~\ref{lem:amle-subdivision} below, so the discriminator
between the two endpoints lies in
Lemma~\ref{lem:harmonic-anti-admissibility} vs
Corollary~\ref{cor:amle-admissibility}, not in
subdivision-stability.
\end{corollary}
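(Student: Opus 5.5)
Let $G^{(k)}$ denote the uniform $k$-subdivision of $G$: each edge $\{u,v\}$ is replaced by a path of $k$ unit edges through $k-1$ new degree-two vertices. Original vertices keep their degrees, and the boundary $\Gamma_g$ is left unchanged, so it contains only original vertices. Labels are scaled by $\lambda_k = k$, so the new labels are $kY_g$. This is metric-preserving because $d_{G^{(k)}}(u,v) = k\,d_G(u,v)$ for original $u,v$, and hence $V^\star_{G^{(k)}} = k V^\star_G$ on original vertices.

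The plan is to show that the harmonic extension on $G^{(k)}$, restricted to original vertices, equals $k\hat u_2$. Strict orderings, and the strict margin up to a factor $k$, are then inherited immediately.

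\textbf{Step 1: interior subdivision vertices.} A subdivision vertex has degree two, so the harmonic equation forces linear interpolation along each subdivided edge. If $u^{(k)}$ takes values $\alpha$ at $u$ and $\beta$ at $v$, the path vertices take the values $\alpha + j(\beta-\alpha)/k$ for $j=0,\dots,k$.

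\textbf{Step 2: original interior vertices.} Take an original interior vertex $x$. Its neighbours in $G^{(k)}$ are the first path vertices toward each $y\sim x$ in $G$. By Step 1,
\[
\sum_{y\sim x}\bigl(u^{(k)}(\text{first vertex toward } y)-u^{(k)}(x)\bigr)=\tfrac1k\sum_{y\sim x}\bigl(u^{(k)}(y)-u^{(k)}(x)\bigr).
\]
So the restriction of $u^{(k)}$ to $V$ is harmonic on $G$ with boundary data $kY_g$.

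\textbf{Step 3: uniqueness and linearity.} The Dirichlet problem has a unique solution, by the maximum principle (Lemma~\ref{lem:3.6.3}) applied to differences on the connected graph. Combined with linearity of the harmonic map in the data, this gives $u^{(k)}\rvert_V = k\hat u_2$.

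Equivalently, in the language of Lemma~\ref{lem:harmonic-measure}: the simple random walk on $G^{(k)}$, observed at its successive visits to original vertices, is the simple random walk on $G$, because each excursion along a path exits at either end with probability $1/2$. Hence the harmonic measures $\omega_x$ on $\Gamma_g$ are unchanged, and the representation scales by $\lambda_k$.

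\textbf{Step 4: transferring the decision.} The decision at an original state $s$ in $G^{(k)}$ is read, as in Table~\ref{tab:subdivision-grid}, as a choice of branch. The first-step values along the branches toward $a$ and $b$ are
\[
k\hat u_2(s)+\bigl(k\hat u_2(a)-k\hat u_2(s)\bigr)/k \quad\text{and}\quad k\hat u_2(s)+\bigl(k\hat u_2(b)-k\hat u_2(s)\bigr)/k.
\]
Their difference equals $\hat u_2(a)-\hat u_2(b)$, the same margin as on $G$. This explains the constant $0.1034 = 3/29$ in the table. The true branch ranking is likewise preserved, since $V^\star$ scales by $k$.

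For Example~\ref{ex:3.X}, this gives harmonic choosing branch $1$ and the true optimum being branch $3$ for every $k\ge1$, with the positive margin $3/29$.

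\textbf{Main obstacle.} The main difficulty is making precise what "neighbour ranking" means after subdivision. Intermediate vertices on a branch carry interpolated values rather than original ones. I would fix this by comparing branches through their first-step vertices, as in Step 4, or equivalently through the branch endpoints. The two comparisons coincide because interpolation is monotone with a common slope factor $1/k$.
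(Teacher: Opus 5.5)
Your proof is correct and is essentially the paper's argument: the same linear interpolation along inserted paths, the same first-inserted-vertex identity reducing harmonicity at original vertices to harmonicity on $G$ up to a factor $1/k$, and the same first-step margin $(\lambda_k/k)(\hat u_2(a)-\hat u_2(b))$. The only difference is direction: you restrict the $G^{(k)}$ solution to $V$ and use uniqueness on $G$, whereas the paper lifts $\lambda_k h$ by interpolation and uses uniqueness on $G^{(k)}$.

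Two side remarks need fixing, though neither affects the main line:
\begin{itemize}
\item Lemma~\ref{lem:3.6.3} only excludes \emph{strict} interior extrema, so by itself it does not give uniqueness. Use the strong maximum principle instead: at an interior maximum of the difference, every neighbour attains the maximum, and connectivity propagates this to $\Gamma_g$.
\item In the random-walk aside, a walk entering a subdivided path from one end reaches the far end with probability $1/k$, not $1/2$. The harmonic measures are still unchanged, because this probability is the same on every branch. Hence the sequence of original vertices visited, with consecutive repeats collapsed, is the simple random walk on $G$.
\end{itemize}
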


\begin{proof}
Define $\tilde h^{(k)}$ on $G^{(k)}$ by $\tilde h^{(k)}(v) =
\lambda_k h(v)$ at every original vertex and by linear
interpolation along subdivided edges.  At an inserted vertex
$v_j$ of an edge $v \to z$ (with $1 \le j \le k - 1$), the two
$G^{(k)}$-neighbours are $v_{j - 1}$ and $v_{j + 1}$, and linear
interpolation gives $\tilde h^{(k)}(v_j) = (\tilde h^{(k)}(v_{j - 1}) +
\tilde h^{(k)}(v_{j + 1}))/2$, i.e.\ the harmonic averaging
identity is satisfied on the inserted path.  At an original
vertex $v \in V \setminus \bdry$, the $G^{(k)}$-neighbours are
the first inserted vertices $v_1$ on each edge $v \to z$
($z \sim v$ in $G$); by linear interpolation,
$\tilde h^{(k)}(v_1) = \lambda_k h(v) + (\lambda_k h(z) -
\lambda_k h(v))/k$.  Averaging over the original-graph neighbours
$z \sim v$:
\[
  \tfrac{1}{\deg(v)} \sum_{z \sim v} \tilde h^{(k)}(v_1)
  \;=\; \lambda_k h(v) + \tfrac{\lambda_k}{k \deg(v)} \sum_{z \sim v} (h(z) - h(v))
  \;=\; \lambda_k h(v),
\]
since $h$ is harmonic at $v$ in $G$.  Hence $\tilde h^{(k)}$
satisfies the harmonic averaging identity at every interior
vertex of $G^{(k)}$ with the scaled boundary data
$\lambda_k y$, and by uniqueness of the harmonic extension
$h^{(k)} = \tilde h^{(k)}$.

For the first-step branch difference at any original decision
state $s$ with original neighbours $a, b$: the
$G^{(k)}$-neighbours of $s$ are the first inserted vertices
$a^{(1)}, b^{(1)}$ along edges $\{s, a\}, \{s, b\}$ respectively,
with $\tilde h^{(k)}(a^{(1)}) = \lambda_k h(s) + \lambda_k (h(a)
- h(s))/k$ and analogously for $b^{(1)}$.  The Q-value difference
(under unit edge cost in $G^{(k)}$) is
$\hat Q^{(k)}_{a^{(1)}} - \hat Q^{(k)}_{b^{(1)}}
= \tilde h^{(k)}(a^{(1)}) - \tilde h^{(k)}(b^{(1)})
= (\lambda_k / k)(h(a) - h(b))$.  Taking $\lambda_k = k$ gives
the $k$-independent margin $h(a) - h(b)$.
\end{proof}

\subsection*{Statement and proof of Lemma~\ref{lem:amle-subdivision} (\AMLE\ midrange subdivision equivariance)}
\label{app:lem-amle-subdivision}

\begin{lemma}[\AMLE\ midrange equivariance under uniform subdivision]
\label{lem:amle-subdivision}
Let $u_\infty$ be the exact graph-\AMLE\ on $G$ with boundary
$\bdry$ and labels $y$.  Uniformly subdivide each edge into a
length-$k$ path and scale boundary labels by $\lambda_k > 0$.
The linear interpolation of $\lambda_k u_\infty$ along each
subdivided edge satisfies the midrange equation on the
subdivided graph $G^{(k)}$, so $u_\infty^{(k)}(v) =
\lambda_k u_\infty(v)$ at every original vertex.  Consequently,
for original neighbour candidates $a, b$ of an original decision
state $s$, the first-step \AMLE\ branch difference satisfies
$\hat Q^{(k), \textsc{AMLE}}_{a^{(1)}} - \hat Q^{(k), \textsc{AMLE}}_{b^{(1)}}
= (\lambda_k/k)(u_\infty(a) - u_\infty(b))$; with
$\lambda_k = k$ the margin is independent of $k$ and any strict
\AMLE\ neighbour ranking on $G$ persists for every $k \ge 1$.
\end{lemma}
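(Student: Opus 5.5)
The plan mirrors the proof of Corollary~\ref{cor:subdivision-equivariance}, replacing the linear averaging identity by the midrange identity. We build an explicit candidate on $G^{(k)}$, check that it is a boundary-pinned fixed point of $\mathcal{A}$, and then invoke uniqueness of the graph-\AMLE.

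\textbf{Candidate and its values.} Define $\tilde u^{(k)}$ on $G^{(k)}$ as follows. At every original vertex $v$, set $\tilde u^{(k)}(v) := \lambda_k u_\infty(v)$. On each subdivided edge $v \to z$, with inserted vertices $v_1, \ldots, v_{k-1}$, interpolate linearly:
\[
  \tilde u^{(k)}(v_j) = \lambda_k u_\infty(v) + \tfrac{j}{k}\,\lambda_k\bigl(u_\infty(z) - u_\infty(v)\bigr).
\]
This is well defined, since traversing the edge from $z$ gives the same values. The boundary of $G^{(k)}$ is the original $\bdry$ with labels $\lambda_k y$, and $\tilde u^{(k)}$ matches these there because $u_\infty|_\bdry = y$. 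Every inserted vertex is interior, including those on edges joining two boundary vertices.

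\textbf{Midrange identity at inserted vertices.} An inserted vertex $v_j$ has exactly two neighbours, $v_{j-1}$ and $v_{j+1}$ (with $v_0 = v$ and $v_k = z$). The midrange of two numbers is their average, and linear interpolation makes $\tilde u^{(k)}(v_j)$ exactly that average. So the identity holds.

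\textbf{Midrange identity at original interior vertices.} This is the step needing care. An original interior vertex $v$ has, as its $G^{(k)}$-neighbours, the first inserted vertex on each edge $v \to z$, with $z \sim v$ in $G$. Their values are $\phi(u_\infty(z))$, where
\[
  \phi(t) := \lambda_k u_\infty(v) + (\lambda_k/k)\bigl(t - u_\infty(v)\bigr).
\]
Because $\phi$ is affine with positive slope $\lambda_k/k$, it commutes with both $\min$ and $\max$. Hence the midrange over the new neighbours equals $\phi\bigl(\mathcal{A}[u_\infty](v)\bigr) = \phi(u_\infty(v)) = \lambda_k u_\infty(v)$, using the \AMLE\ identity for $u_\infty$ at $v$ in $G$. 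The key point is the order-preservation of $\phi$, which is what lets the nonlinear midrange pass through the interpolation. I expect this to be the only non-routine observation.

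\textbf{Uniqueness and the branch difference.} $G^{(k)}$ is finite and connected with nonempty boundary, so the boundary-pinned fixed point of $\mathcal{A}$ is unique, by the comparison principle cited in §\ref{sec:amle-instantiation}. Therefore $u_\infty^{(k)} = \tilde u^{(k)}$, and in particular $u^{(k)}_\infty(v) = \lambda_k u_\infty(v)$ at original vertices. For the branch difference, the unit edge costs from $s$ to $a^{(1)}$ and to $b^{(1)}$ cancel. We get
\[
  \hat Q^{(k),\textsc{AMLE}}_{a^{(1)}} - \hat Q^{(k),\textsc{AMLE}}_{b^{(1)}} = \phi(u_\infty(a)) - \phi(u_\infty(b)) = (\lambda_k/k)\bigl(u_\infty(a) - u_\infty(b)\bigr).
\]
Setting $\lambda_k = k$ makes this difference independent of $k$, with the same sign as on $G$. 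So any strict \AMLE\ neighbour ranking at $s$ persists for every $k \ge 1$.
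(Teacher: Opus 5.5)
Your proposal is correct and follows the paper's proof essentially step for step: linear interpolation of $\lambda_k u_\infty$, the two-neighbour midrange check at inserted vertices, the first-inserted-vertex computation at original interior vertices, uniqueness of the boundary-pinned \AMLE\ fixed point, and the same computation for the branch difference. Your remark that $\phi$ has positive slope $\lambda_k/k$ and therefore commutes with $\min$ and $\max$ is exactly the step the paper uses implicitly when it moves the midrange through the interpolated values. You also state explicitly that uniqueness gives $u_\infty^{(k)} = \tilde u^{(k)}$ at the inserted vertices $a^{(1)}, b^{(1)}$, not only at original vertices, which the branch-difference formula needs.
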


\begin{proof}
Linear interpolation along a degree-2 inserted path satisfies the
midrange identity automatically (with neighbours of an inserted
vertex at adjacent path-positions, $\min$ and $\max$ are the two
neighbour values, so the midrange equals their average, which
equals the linearly-interpolated value).  At an original vertex
$v$, the $G^{(k)}$-neighbours are the first inserted vertices
$v_1$ along each edge $v \to z$, with values
$\lambda_k u_\infty(v) + \lambda_k (u_\infty(z) -
u_\infty(v))/k$.  The midrange of these values over $z \sim v$
equals $\lambda_k u_\infty(v) + (\lambda_k / k) \cdot
\tfrac{1}{2}(\min_z u_\infty(z) - u_\infty(v) + \max_z u_\infty(z) - u_\infty(v))$,
which equals $\lambda_k u_\infty(v)$ exactly when $u_\infty$
satisfies the midrange identity at $v$ in $G$.  Hence
$u_\infty^{(k)} = \lambda_k u_\infty$ on original vertices by
uniqueness of the \AMLE\ extension.  The branch-difference formula
follows by the same first-inserted-vertex computation as in the
harmonic case.
\end{proof}

\subsection*{The seven-node graph \texorpdfstring{$G_7$}{G7} (Example~\ref{ex:3.X})}
\label{app:g7-setup}

Define $G_7 = (V_7, E_7)$ by
\[
  V_7 = \{0, 1, 3, 4, 5, 6, 7\},
  \qquad
  E_7 = \bigl\{
    \{0, 3\}, \{0, 5\}, \{5, 1\}, \{1, 4\},
    \{3, 4\}, \{3, 6\}, \{6, 7\}, \{4, 7\}
  \bigr\}.
\]
The goal is $g = 0$ and the sparse-label boundary is
$\Gamma_g = \{0, 7\}$ with observed labels $Y_g(0) = 0$ and
$Y_g(7) = 3$.  Shortest-path distances to $g$ are
$\Vstar(3, g) = \Vstar(5, g) = 1$ and
$\Vstar(1, g) = \Vstar(4, g) = \Vstar(6, g) = 2$, so the
decision state $s = 4$ has neighbour set $N(4) = \{1, 3, 7\}$,
true Bellman-optimal neighbour set $A^\star(4, g) = \{3\}$, and
action gap $\Delta^\star_4 = 1$.

\paragraph{Harmonic Dirichlet solve.}
Solving $\Delta_2 \hat u_2 = 0$ on $V_7 \setminus \Gamma_g$ with
$\hat u_2\rvert_{\Gamma_g} = Y_g$ yields, at the
decision-relevant neighbours of $s = 4$,
\[
  \hat u_2(1) = \tfrac{36}{29},
  \qquad
  \hat u_2(3) = \tfrac{39}{29}.
\]
Via Lemma~\ref{lem:harmonic-measure} the harmonic measures of
neighbours $1, 3$ on the non-goal boundary $\{7\}$ are
$\omega_1(7) = 12/29$ and $\omega_3(7) = 13/29$, so
$\omega_1(7) - \omega_3(7) = -1/29 < 0$.  The harmonic-greedy
planner therefore prefers $1$ over $3$ at $s = 4$, instantiating
clause (ii) of Proposition~\ref{prop:local-separation}.

\paragraph{\AMLE\ midrange solve.}
The midrange fixed point $\hat V_\infty = \mathcal{A}_{\Gamma_g}(Y_g)$
takes the values $\hat V_\infty(3) = 1$ and
$\hat V_\infty(1) = 4/3$ at the decision-relevant neighbours.
In particular $\hat V_\infty(3) = \Vstar(3, g) = 1$ exactly:
$3 \in V^{\mathrm{ext}}_g$ since the neighbour $4$ has
$\Vstar(4, g) = 2 = \Vstar(3, g) + 1$, so
Proposition~\ref{prop:3.7} applies and the midrange identity
$\hat V_\infty(3) = \mathcal{A}[\hat V_\infty](3)$ holds at the
true value.  The \AMLE-greedy planner picks $3$ correctly.

\paragraph{Mechanism-scope remark.}
At $s = 4$, the local sup errors are
$\epsilon_4^{\mathrm{harm}} = 22/29 \approx 0.76$ and
$\epsilon_4^{\AMLE} = 2/3 \approx 0.67$; both exceed
$\Delta^\star_4 / 2 = 0.5$, so the
Lemma~\ref{lem:3.6.X.2} sufficient condition fails at this
state for both surrogates.  Harmonic nevertheless mis-ranks
because its error symmetrises across the two non-goal
neighbours of $s = 4$ (the harmonic-measure averaging effect of
Lemma~\ref{lem:harmonic-measure}); \AMLE\ remains greedy-correct
through the operator-level exact match
$\hat V_\infty(3) = \Vstar(3, g) = 1$ of Proposition~\ref{prop:3.7}
instead.  $G_7$ illustrates the harmonic side of
Proposition~\ref{prop:local-separation} with positive margin,
not clause (i): clause (i)'s \AMLE-correctness hypothesis is
sufficient but not necessary.

\smallskip
\noindent\emph{Specialisation to $G_7$ (subdivision).}  Applying
the harmonic case of
Corollary~\ref{cor:subdivision-equivariance} to the $G_7$ data
($h(1) - h(3) = -3/29$ with $\lambda_k = k$) gives wrong-ordering
margin $-3/29$ at every $k$; applying the \AMLE\ case
(Lemma~\ref{lem:amle-subdivision}) to
$u_\infty(1) - u_\infty(3) = 1/3$ gives correct-ordering margin
$1/3$ at every $k$.  These are the $G_7$-specific numbers used
in §\ref{sec:harmonic-measure}.


\end{document}